\def\eqref#1{equation~\ref{#1}}
\def\1{\bm{1}}
\DeclareMathAlphabet{\mathsfit}{\encodingdefault}{\sfdefault}{m}{sl}
\SetMathAlphabet{\mathsfit}{bold}{\encodingdefault}{\sfdefault}{bx}{n}
\newcommand{\R}{\mathbb{R}}
\newtheorem{theorem}{Theorem}[section]
\newtheorem{proposition*}{Proposition}
\newtheorem{proposition}[theorem]{Proposition}
\newtheorem{lemma}[theorem]{Lemma}
\newtheorem{corollary}[theorem]{Corollary}
\theoremstyle{definition}
\newtheorem{definition}[theorem]{Definition}
\theoremstyle{remark}
\newtheorem{remark}[theorem]{Remark}
\let\oldTheta\Theta
\def\Theta{\mathit{\oldTheta}}
\DeclareMathOperator{\rank}{rank}
\newcommand{\param}{\Theta}
\newcommand{\layer}{W}
\newcommand{\localF}{\LimitLoss_{loc}}
\DeclareMathOperator{\grad}{\nabla}
\newcommand{\dataset}{\mathcal{X}}
\newcommand{\intermdata}{\mathcal{Z}}
\DeclareMathOperator{\Loss}{\mathcal{L}}
\DeclareMathOperator{\loss}{\ell}
\DeclareMathOperator{\TCV}{\mathrm{TCV}}
\DeclareMathOperator{\DNC}{\mathrm{DNC1}}
\DeclareMathOperator{\WCSS}{\mathrm{WCSS}}
\newcommand{\matrices}{\mathcal{M}}
\newcommand{\LimitLoss}{\mathscr L}
\title{Provable Emergence of Deep Neural Collapse and Low-Rank Bias in $L^2$-Regularized Nonlinear Networks}
\author{\name Emanuele Zangrando \email emanuele.zangrando@gssi.it \\
      \addr Gran Sasso Science Institute
      \AND
      \name Piero Deidda \email piero.deidda@gssi.it \\
      \addr Gran Sasso Science Institute
      \AND
      \name Simone Brugiapaglia \email simone.brugiapaglia@concordia.ca\\
      \addr Concordia University 
      \AND
      \name Nicola Guglielmi \email nicola.guglielmi@gssi.it \\
      \addr Gran Sasso Science Institute
      \AND
      \name Francesco Tudisco \email f.tudisco@ed.ac.uk \\
      \addr University of Edinburgh
      \AND
      }
\begin{document}

\maketitle

\begin{abstract}
We present a unified theoretical framework connecting the first property of Deep Neural
Collapse (DNC1) to the emergence of implicit low-rank bias in nonlinear networks trained
with $L^2$ weight decay regularization. Our main contributions are threefold. First, we derive a
quantitative relation between the Total Cluster Variation (TCV) of intermediate embeddings
and the numerical rank of stationary weight matrices.
In particular, we establish that, at any critical point, the
distance from a weight matrix to the set of rank-$K$ matrices is bounded by a constant
times the TCV of earlier-layer features, scaled inversely with the weight-decay parameter.
Second, we prove global optimality of DNC1 in a constrained representation-cost setting for
both feedforward and residual architectures, showing that zero TCV across intermediate
layers minimizes the representation cost under natural architectural constraints.
Third, we establish a benign landscape property: for almost every interpolating
initialization there exists a continuous, loss-decreasing path from the initialization to
a globally optimal, DNC1-satisfying configuration. 
Our theoretical claims are
validated empirically; numerical experiments confirm the predicted relations among TCV,
singular-value structure, and weight decay. These results indicate that neural collapse and
low-rank bias are intimately linked phenomena arising from the optimization geometry
induced by weight decay.
\end{abstract}

\section{Introduction}
Recent years have seen growing theoretical and experimental evidence of low-rank properties of deep neural networks. For example, it has been noted that a large portion of singular values from linear layers in different models can be removed almost without affecting performance, both for feedforward and residual networks \citep{Schotthofer_low_rank,Ydelbayev_low_rank}. Analogous results have been shown on feedforward and convolutional networks for image classification \citep{Li_basis_CNN,wang2021pufferfish,khodak2021initialization,Schotthofer_low_rank} as well as on Recurrent Neural Networks (RNNs) trained to learn neural dynamics in neuroscience tasks \citep{schuessler2020interplay,pellegrino2023low}. 

Alongside the wealth of experimental evidence of low-rank properties of neural networks, several recent works have investigated the low-rank bias phenomenon from the theoretical point of view,  providing evidence of decaying singular values in deep linear architectures \citep{arora2019implicit,feng2022rank,huh2022low} 
and of low-rank properties of networks' Hessian maps \citep{singh2021analytic}, analyzing low-rank geometric properties of training gradient flow dynamics for deep linear networks \citep{bah2022learning}, showing that low-rank bias is related to weight decay \citep{galanti2023characterizing,sukenik2024neural,kuzborskij2025lowrankbiasweightdecay}, and to training invariances \citep{le2022training,ziyin2025parametersymmetrybreakingrestoration}. In particular, the authors relate the rank of the weight matrices during SGD to the batch size adopted during the training of the network.
However, based only on the batch size, the results from \citep{galanti2023characterizing}  become uninformative when the batch size is comparable to the number of parameters of the neural network. 

A related phenomenon commonly observed during the training of classifiers is ``neural collapse'' \citep{papyan2020prevalence, zhu2021geometric} and more generally "deep (or intermediate) neural collapse" \cite{parker2023neural, rangamani2023feature, sukenik2024deep, sukenik2024neural}. This phenomenon is rigorously described in the literature using four properties named \textbf{NC1--4}, the first of which (\textbf{NC1}) essentially states that the within-class variability of intermediate representations goes to zero even for layers in the middle of a deep neural network 
(see \Cref{def:DNC1}). 
   The optimality of neural collapse at the last layer has been extensively investigated and proved in different settings, otherwise only few extensions have been proposed for its deep version.

In particular, the emergence of the neural collapse phenomenon at the last layer was first empirically observed in \citep{papyan2020prevalence}, and proven to be optimal in different settings,
such as Unconstrained Feature Model (UFM) \citep{Fang_minority_collapse,mixon2020neuralcollapseunconstrainedfeatures,zhu2021geometric}, with constrained UFM \citep{Dang_relu,yaras2023neuralcollapsenormalizedfeatures}, with cross-entropy loss \citep{e2021emergencesimplexsymmetryfinal,kunin2023asymmetricmaximummarginbias,Wanli_imbalanced_data}, Neural Tangent Kernel (NTK) block structure aligned with class labels \citep{seleznova2023neural}, linear tail and wide architecture \citep{jacot2025wide}, effects of the loss function \citep{zhou2022loss}, and in the context of Graph Neural Networks (GNNs) \citep{kothapalli2023neural}.
Moreover, some works analyzed the dynamics leading to neural collapse, from the observation of a central path dynamics \citep{han2022neuralcollapsemseloss}, to the use of large network dynamics \citep{jacot2025wide}. Finally, some authors characterize stability of the stationary points of UFM in the regularized setting \citep{ji2022unconstrainedlayerpeeledperspectiveneural,zhu2021geometric,zhou2022optimizationlandscapeneuralcollapse}, and in the Riemannian setting, with feature and weights constrained to oblique manifolds \citep{yaras2023neuralcollapsenormalizedfeatures}.

The analysis of the neural collapse phenomenon beyond the last layer was first empirically studied by \citep{rangamani2023feature}, and addressed under the name of ``deep'' (or ``intermediate") neural collapse phenomenon ($\textbf{DNC}$), where notably the NC1 property is observed to hold even at intermediate layers.
Proving $\DNC$ optimality has proven more challenging than NC because it requires switching from the linear setting of the last layer to the nonlinear one of the intermediate layers. A theoretical analysis of this  phenomenon has been conducted in \citep{sukenik2024deep,sukenik2024neural} that establish $\DNC$ optimality in the Deep-UFM model but restrict to binary classification or require specific architectural assumptions (L-DUFM, pyramidal widths). In particular, the key limitation of prior work about $\DNC$ is that it studies only simplified linear/UFM models, not able to take into account the full extent of architectural constraints.

\subsection{Main Contributions}
In this work we show, for the first time, a rigorous connection between the phenomenon of deep neural collapse and low-rank implicit bias for general nonlinear networks. The main contribution is theoretical and, in particular, our main results, \emph{which hold in the presence of weight decay}, can be summarized as follows: 
\begin{itemize}[leftmargin=*,itemsep=0pt,topsep=0pt]
    \item in \Cref{sec:main_results} we prove that within-class variability of hidden layers during training (including the initial data set) has a quantifiable effect on the numerical rank of the network parameters as the latter decays with the total cluster variation of the hidden representations (\Cref{Theorem_best_rank_k-approx_bound});
    \item in \Cref{sec:main_DNC} we show that zero total cluster variation for all intermediate layers is optimal in a fairly general setting, \emph{both for feedforward and residual networks} (\Cref{thm:main_DNC_optimal_ff}); 
    \item \Cref{thm:main_reachability_nc} establishes that for (almost) every interpolating initial condition, there exists a loss-decreasing trajectory connecting the initial condition to a collapsed configuration, thereby offering an explanation to the consistent empirical observation of deep neural collapse \citep{rangamani2023feature}.
\end{itemize}
Overall, the combination of the results above proves that in the regime of small weight decay regularization, zero within-class variability for all intermediate representations is globally optimal under some natural constraints, therefore showing that global minima in parameter space are low-rank matrices. Moreover, convergence to these highly structured solutions may be regarded as universal in the sense that, \emph{there is essentially no loss barrier between interpolating features to deep-neural collapsed ones}, which constitutes a step forward towards explaining the universality in the emergence of (deep) neural collapse.

Our paper improves upon the results presented in the related literature in different aspects. 
First, we show that the rank of stationary points in parameter space is controlled by $\DNC$, a metric defining the first property of neural collapse.  In particular, our result also holds for stationary points that do not satisfy DNC2/DNC3. Notably, the results in \citep{sukenik2024neural} suggest that there exist settings where our results become crucial, indeed the authors show that, in such settings, while $\DNC$ keeps being optimal, the other properties may fail.

Second, the setting in which we prove our results is not a simplified model, but an exact representation of a feedforward/ResNet architecture with two additional constraints. The analysis is based on the extrinsic view of a neural network architecture as a constraint in the space of representable intermediate embeddings
\citep{jacot2022featurelearningl2regularizeddnns,han2022neuralcollapsemseloss}. Finally, in this generalized setting, we prove a \emph{benign} property of the loss landscape, which could explain the frequent emergence of neural collapse in intermediate representations as observed in \citep{rangamani2023feature}.

The remainder of the paper is organized as follows. In \Cref{sec:preliminaries} we
introduce notation and set up the optimization framework. In \Cref{sec:main_results} we prove
the rank–TCV bound and discusses its implications. \Cref{sec:main_DNC} contains
the optimality and landscape results, and \Cref{sec:num-exp} validates the
theory empirically.
%
\section{Setting and Preliminaries}\label{sec:preliminaries}
Consider a dataset $\mathcal{X} = \{x_j\}_{j=1}^N \subseteq \mathbb{R}^d$ with $c$ corresponding labels $\mathcal{Y}=\{y_j\}_{j=1}^N\subseteq \mathbb{R}^c$. 

We assume to train a fully connected $L$-layer {feedforward neural network} $f(\param,x)$ of the form 
\begin{equation}
\label{DEF_Neural_Network}
\begin{aligned}
z^0=x, \quad f(\param,x) = z^{L}, \quad   z^{l} = \sigma_l\big(\layer_l z^{l-1} + b_l\big), \; l=1,\dots,L 
\end{aligned}
\end{equation}
to learn from the given data and labels.
In equation \eqref{DEF_Neural_Network}, $\sigma_l$ are  scalar nonlinear activation functions acting entrywise and $\param$ denotes all trainable parameters $\param =(W_1,\dots,W_L,b_1,\dots,b_L)$. We write
\begin{equation}\label{DEF_Trainable_parameters}
    \begin{aligned}
         \param_{l_1}^{l_2} &:=(\layer_{l_1+1},\dots,\layer_{l_2}, b_{l_1+1},\dots, b_{l_2}), \quad l_1<l_2,\\
         f_{l_1}^{l_2}(\Theta_{l_1}^{l_2}, z^{l_1}) &:= \sigma(W_{l_2}\cdots \sigma(W_{l_1+1}z^{l_1}+b_{l_1+1})+\cdots+b_{l_2})
    \end{aligned}
\end{equation}
to denote the training parameters and concatenation of layers from $l_1+1$ to $l_2$, respectively. We adopt the convention that, when we do not specify the lower index then it starts from zero, and when we omit the top one it ends at the overall depth $L$, namely, $\param^{l_2} := \param_0^{l_2}$, $f^{l_2}:=f^{l_2}_0$ and $\param_{l_1} := \param_{l_1}^L$, $f_{l_1}:=f^{L}_{l_1}$. Notice that, with this notation, $\param= \param_0 = \param^L = \param_0^L$, $f=f_0=f^L=f_0^L$ and $\param^0$ is empty. Finally, for the sake of brevity, we write $\intermdata^l:=\{z^l_i\}_{x_i\in \dataset}$ to denote the set of outputs of the $l$-th layer, where $z^l_i = f^l(\param^l, x_i)$
is defined as in \eqref{DEF_Neural_Network}  with input data $x_i$, and  $\intermdata^0=\dataset$.

Moreover, to simplify notation, we will denote the feature matrices of input, output, and intermediate layers as
\begin{align*}
&X=[x_1,\dots, x_N]\in\mathbb R^{d \times N}, \quad Y=[y_1,\dots, y_N]\in\mathbb R^{c \times N}, \quad  Z^l = [z_1^l,\dots,z_N^l]  \in \mathbb R^{n_l \times N},  
\end{align*}
where $1\leq l \leq L$.
The parameters $\param$ are assumed to be trained by minimizing a regularized loss function $\Loss_\lambda(\param)$ composed by the sum of losses at individual data points, i.e.,
\begin{equation}\label{Def_Loss_function}
\Loss_\lambda(\param):=\frac{1}{N} \sum_{i=1}^N \loss\big(f(\param,x_i), y_i\big) + \lambda \|\param\|^2,\quad \text{where}\quad \|\Theta \|^2 = \sum_{i = 1}^L \Bigl(\|W_i \|^2+ \|b_i \|^2\Bigr).
\end{equation}
Here $\|\cdot\|$ denotes the entrywise $L^2$ norm (when the input is a matrix, it coincides with the Frobenius norm) and  $\lambda \geq 0$ is a \emph{weight decay} or \emph{regularization} parameter. 

In this setting, in \Cref{sec:main_results} we will quantify a precise relationship between the numerical rank of the stationary points of $\mathcal L_\lambda$ and the total cluster variation of intermediate feature matrices.
%
\section{A Link Between Low-Rank and Deep-Neural Collapse}\label{sec:main_results}
%
%
Note that any critical point of $\mathcal{L}_{\lambda}(\Theta)$ satisfies the following equation: 
\begin{equation}\label{Eq_critical_point_eq_regularized_loss}
\lambda \layer_l^*=-\nabla_{\layer_l}\Loss_0(\param^*) \qquad \forall l=1,\dots, L.
\end{equation}
The intuition is that Equation \eqref{Eq_critical_point_eq_regularized_loss} resembles an ‘‘eigenvalue equation'' for the nonlinear operator $\nabla \Loss_0$.  
In other words, for a fixed weight decay parameter $\lambda$, training the neural network corresponds to finding a nonlinear eigenvector $\Theta^*$ having $-\lambda$ as eigenvalue.
In addition, equation \eqref{Eq_critical_point_eq_regularized_loss} immediately implies that whenever the operators $\nabla_{W_l} \Loss_0$ have low rank (i.e., they take values in some low-rank space), then also any weight matrix $W^*$ composing the eigenvector $\Theta^*$ has necessarily low rank. In the following, we investigate this relationship more in depth, providing sufficient conditions to have approximately low-rank gradients.

The first key point in this study is provided by the following proposition. Indeed, for any differentiable loss function $\loss$, the gradient with respect to any layer, at a fixed data point, has rank at most 1 (see also Lemma~3.1 in \citep{galanti2023characterizing}). Here we consider the gradient at a batch of data points. The proof of this result can be found in \Cref{Proof_Prop_rank_of_lossgradient_at_single_data}.
\begin{proposition}[Small batches yield low-rank gradients]
\label{Prop_rank_of_lossgradient_at_single_data}
Let $f(\param,x)$ be a feedforward neural network as in \eqref{DEF_Neural_Network} and $\loss:\mathbb{R}^c \times \mathbb{R}^c \to \mathbb{R}$ be a differentiable loss function as in \eqref{Def_Loss_function}. Then, for any $\{(x_i,y_i)\}_{i=1}^K\subseteq \mathbb{R}^d\times \mathbb{R}^c$, we have
$\rank\Big(\grad_{\layer_l} \sum_{i=1}^K \loss\big(f(\param,x_i), y_i\big)\Big)\leq K$.
\end{proposition}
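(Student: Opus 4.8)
The plan is to reduce the statement to the single-data-point case and then invoke subadditivity of the rank. Fix a data point $(x_i,y_i)$ and make the dependence of the loss on $\layer_l$ explicit by isolating the pre-activation vector $a^l_i := \layer_l z^{l-1}_i + b_l$, so that $z^l_i = \sigma_l(a^l_i)$. Since $z^{l-1}_i$ is produced by the layers $1,\dots,l-1$ alone, it does not depend on $\layer_l$, and the loss $\loss(f(\param,x_i),y_i)$ depends on $\layer_l$ only through $a^l_i$.

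I would then differentiate via the chain rule. Writing $\delta^l_i := \grad_{a^l_i}\loss(f(\param,x_i),y_i)$ for the backpropagated error at the pre-activation of the $l$-th layer, and using that $a^l_i$ is affine in $\layer_l$ with input vector $z^{l-1}_i$, one obtains
\[
\grad_{\layer_l}\loss(f(\param,x_i),y_i) = \delta^l_i\,(z^{l-1}_i)^\top ,
\]
an outer product of two vectors, hence a matrix of rank at most $1$. The crucial observation is that every nonlinearity and every weight of the remaining layers enters only through the single vector $\delta^l_i$; regardless of how complicated the rest of the architecture is, the gradient with respect to $\layer_l$ retains this rank-one structure.

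Summing over the batch, the gradient becomes $\sum_{i=1}^K \delta^l_i (z^{l-1}_i)^\top$, and subadditivity of the rank gives the bound $\sum_{i=1}^K \rank\big(\delta^l_i (z^{l-1}_i)^\top\big) \leq K$. Equivalently, collecting the vectors into the matrices $D=[\delta^l_1\,|\,\cdots\,|\,\delta^l_K]$ and $Z=[z^{l-1}_1\,|\,\cdots\,|\,z^{l-1}_K]$, the gradient equals $D Z^\top$, a product of matrices each having $K$ columns, whose rank is therefore at most $K$. Since the argument is a direct application of backpropagation, I expect no substantial obstacle; the only point requiring care is the bookkeeping of the chain rule through the nonlinear activations — correctly factoring the dependence of the loss on $\layer_l$ through $a^l_i$ while treating $z^{l-1}_i$ as constant in $\layer_l$.
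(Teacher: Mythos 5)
Your proof is correct and follows essentially the same route as the paper: both differentiate through the pre-activation $a^l = \layer_l z^{l-1} + b_l$ to expose a rank-one outer-product structure per data point, then conclude by subadditivity of the rank over the $K$ batch elements. The only cosmetic difference is that your backpropagated error $\delta^l_i = \grad_{a^l}\loss$ absorbs the downstream Jacobian, giving the gradient directly as $\delta^l_i (z^{l-1}_i)^\top$, whereas the paper keeps that Jacobian explicit and invokes submultiplicativity of the rank to reach the same bound.
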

Note that, given $z^j=f^j(\param^j,x)$, as long as $j<l$, it holds 
\begin{equation}\label{eq_Intermediate_gradients_stay_invariant}
    \grad_{W_l} \loss\big(f(\param,x), y\big)=
    \grad_{\layer_l} \loss \big(f_j(\Theta_j, z^j),y\big).
\end{equation}
Hence, fixed a batch of data, \Cref{Prop_rank_of_lossgradient_at_single_data} allows us to control the rank of the gradient, with respect to a subsequent layer, of any intermediate loss function $\loss$.

In addition, \Cref{Prop_rank_of_lossgradient_at_single_data} immediately yields an example where gradients have necessarily low rank. 
Consider the case of degenerate data, i.e., assume that there exist $K$ pairs $\{(\bar{x}_k, \bar{y}_k)\}_{k=1}^K$ such that any data point $(x_i,y_i)$ is equal to some $(\bar{x}_k, \bar{y}_k)$. 
Then, by \Cref{Prop_rank_of_lossgradient_at_single_data}, $\grad_{\layer_l}\sum_{i}\loss(\param,x_i,y_i)$ has rank at most $K$.
But the right-hand side of \eqref{Eq_critical_point_eq_regularized_loss} has rank at most $K$, hence any critical point of the loss function $\Loss_{\lambda}$ necessarily corresponds to a neural network having all layers of rank at most $K$. 
The same argument can be extended to the situation of total intermediate neural collapse \citep{Poggio_hidden_nc}, where the intermediate hidden set $\intermdata^l$ consists of at most $K$ distinct points.
Indeed, assume that for a stationary point $\param^*$ 
there exists a layer $l$ and points $\{(\bar{z}_k^l,\bar{y}_k)\}_{k=1}^K$ such that any pair $(z^l_i,y_i)=(\bar{z}_k^l,\bar{y}_k)$ for some $k$. 
Then, by \Cref{Prop_rank_of_lossgradient_at_single_data} and the subsequent remark \eqref{eq_Intermediate_gradients_stay_invariant}, any subsequent layer, $\layer_j^*$ with $j>l$, has rank at most $K$.  

In the following, we prove that under weaker hypotheses, like the collapse of hidden representations, any critical point of the loss function produces layers close to low-rank. 
In particular, the approximate rank of the layers at a stationary point is given by the number of clusters (or collapsing sets) and the approximation error depends both on the regularity of the loss function in the clusters and the within-cluster variability, which we measure as the total cluster variation of each hidden set $\intermdata^l$.
%
\subsection{Clustered Data and Total Cluster Variation}\label{SEC:Clustered_data_total_cluster_variation}
Assume $\mathcal{C}=\{\mathcal{C}_1, \ldots, \mathcal{C}_K\}$ to be a $K$-partition of the input data $\mathcal{X}$ and corresponding labels $\mathcal{Y}$, i.e., for any $(x_i,y_i)\in (\mathcal{X}, \mathcal{Y}) $ there exists a unique $\mathcal{C}_k\in \mathcal{C}$ such that $(x_i,y_i)\in\mathcal{C}_k$. 
We denote  by $N_k = |\mathcal{C}_k|$ the cardinality of the $k$-th family and adopt the notation
%
$(\mathcal{X},\mathcal{Y}) 
= \bigcup_{k=1}^K \bigcup_{i \in \mathcal{C}_k} \{(x_i,y_i)\}
= \bigcup_{k=1}^K \bigcup_{i=1}^{N_k} \{(x_i^k,y_i^k)\}$,
%
where with a small abuse of notation we use $\mathcal{C}_k$ to denote also the set of indices whose corresponding data points are contained in $\mathcal{C}_k$.
Now let $\bar{x}_k=\bar{z}^0_k$, $\bar{y}_k$ and $\bar{z}^l_k$ be the centroids of the sets $\{x_i\}_{i\in \mathcal{C}_k}$, $\{y_i\}_{i\in \mathcal{C}_k}$ and $\{z^l_i\}_{i\in \mathcal{C}_k}$, respectively, i.e., 
\begin{equation*}
    \bar{z}^l_k=\frac{1}{N_K}\sum_{i\in \mathcal{C}_k} z_i^l,
    \qquad \bar{y}_k=\frac{1}{N_K}\sum_{i\in \mathcal{C}_k}y_i.
\end{equation*}
To quantify how clustered the data is with respect to the partition $\mathcal{C}$, we consider the \emph{Within-Cluster Sum of Squares (WCSS)}, defined as 
\begin{equation}\label{DEF_WCSS}
\WCSS_{l}(\mathcal{C})=\frac{1}{N}\sum_{k=1}^{K} \sum_{i\in\mathcal{C}_k}
\|\big(z^l_i,y_i\big)- \big(\bar{z}_k^l,\bar{y}_k\big)\|^2.
\end{equation}
This quantity measures the within-cluster variance (see, e.g., \citep{hastie2009elements}) 
of the data at layer $l$ with respect to a prescribed partition and the corresponding centroids. 

Note that having a clustered dataset at the $l$-th layer means that there exists a suitable partition $\mathcal{C}^*$ such that $\WCSS_{l}(\mathcal{C}^*)$ is sufficiently small. To measure the within-cluster variability of the data at the $l$-th layer when it is partitioned into at most $K$ clusters, we introduce the \emph{Total Cluster Variation}.
\begin{definition}[Total Cluster Variation]
 The $K$-th \emph{Total Cluster Variation (TCV)} at layer $l$, is defined as 
\begin{equation}\label{DEF_TCV}
\TCV_{K,l}:=\min_{\mathcal{C}\in \cup_{r=1}^K\mathcal{P}_r} \WCSS_{l}(\mathcal{C}),
\end{equation}
where $\mathcal{P}_r$ is the set of all partitions of $\{(z_i^l,y_i)\}_{i=1}^N$  into $r$ sets. 
\end{definition}

Note that, for $K$ sufficiently smaller than $N$, a small value of  $\TCV_{K,l}$  indicates an intermediate collapse of the network's features at layer $l$. Moreover, being $\TCV_{K,l}$ defined through a variational problem to get the tightest bound possible, it is trivially upper bounded by the total cluster variation on the classes. 
These considerations are directly connected with the notion of deep (or intermediate) neural collapse \citep{sukenik2024deep,papyan2020traces,rangamani2023feature}, whose first characterizing property is recalled below. 
\begin{definition}[Deep Neural Collapse 1]\label{Def_DNC1}
\label{def:DNC1}
    We say that a neural network $f$ trained on a dataset $(\mathcal X, \mathcal Y)$ satisfies $\DNC$ at layer $l$ if $\TCV_{K,l} = 0$ for $K = c$ and the minimizer in \eqref{DEF_TCV} is given by the partition corresponding to the classes.
\end{definition}
%
\subsection{Neural Collapse Yields Low-Rank Bias}
%
In this section, we show that the occurrence of well-clustered hidden representations, i.e., of intermediate neural collapse, necessarily yields low-rank properties for the underlying neural network in the presence of weight decay.
Given a partition $\mathcal{C}=\{\mathcal{C}_k\}_{k=1}^K$ as in \Cref{SEC:Clustered_data_total_cluster_variation}, we can introduce the centroid-based loss at the $l$-th layer:
\begin{equation*}
\label{DEF_controid_based_loss}
\Loss^{l,\mathcal{C}}_\lambda(\param) = \sum_{k = 1}^K \pi_k \loss\big(f_l(\param_{l},\bar{z}_k^l), \bar{y}_k\big) + \lambda \|\param\|^2, \quad \pi_k = \frac{N_k}{N}.
\end{equation*}
Note that $\mathcal{L}^{0,\mathcal{C}}_\lambda$ is equivalent to the standard loss $\Loss_\lambda$ in the situation of degenerate data, i.e., when the data within any family of $\mathcal{C}$ coincides with its centroid.
Moreover, from the previous discussion and \Cref{Prop_rank_of_lossgradient_at_single_data}, we know that the gradient with respect to any of the last $L-l$ layers of the $l$-th centroid-based loss has rank at most $K$.
 In the following lemma (proved in \Cref{Lemma3.3Proof}), an easy computation allows us to compare the gradients of the original loss and the centroid-based loss:
\begin{lemma}[Centroid-based gradient approximation of full gradient]\label{Lemma_approx__fullgradient_by_centroid_based_loss_gradient}
    Fix a partition $\mathcal C=\{\mathcal C_k\}_{k=1}^K$ and a layer $l$ with weight matrix $W_l$. Let the function 
    $\loss(f_j(\Theta_j,z), y)$ be $C^1$ in $\Theta_j$ and $\nabla_{W_l}\loss(f_j(\Theta_j,z), y)$ be $C^2$ in $(z,y)$
    for any $j<l$. 
    Then, $\forall j<l$
    \begin{equation*}
        \begin{array}{lr}
\grad_{\layer_l}\Loss_0(\param)=\nabla_{\layer_l}\Loss_0^{j,\mathcal{C}}(\param)+R(\param), \quad \|R(\param)\|\leq M_{l,j}(\param,\mathcal{C}) \WCSS_{j}(\mathcal{C})
    \end{array}
    \end{equation*}
     where $\displaystyle{M_{l,j}(\param,\mathcal{C}):=\sup_{\substack{k=1,\dots,K  \\ (z,y)\in 
    \mathcal{H}^j_k}} 
    \|\nabla^2_{(z,y)}\nabla_{\layer_l} \loss(f_j(\param_{j},z),y)\|}$
    and $\mathcal{H}^j_k:=\mathrm{ConvHull}\{(z_i^j,y_i)\}_{i\in\mathcal{C}_k}$.
\end{lemma}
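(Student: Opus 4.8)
The plan is to reduce everything to a first-order Taylor expansion (with exact remainder) of a single matrix-valued map around the cluster centroids, exploiting the fact that centroids annihilate the linear term. First I would use the gradient-invariance identity \eqref{eq_Intermediate_gradients_stay_invariant} to rewrite the full gradient as $\nabla_{W_l}\Loss_0(\param) = \frac{1}{N}\sum_{i=1}^N \nabla_{W_l}\ell(f_j(\param_j, z_i^j), y_i)$ and reorganize the sum over the clusters $\mathcal{C}_k$. Setting $\pi_k = N_k/N$ (the choice that makes $\Loss^{0,\mathcal{C}}_\lambda$ agree with $\Loss_\lambda$ on degenerate data), the centroid-based gradient becomes $\nabla_{W_l}\Loss_0^{j,\mathcal{C}}(\param) = \frac{1}{N}\sum_{k=1}^K N_k\, \nabla_{W_l}\ell(f_j(\param_j, \bar z_k^j), \bar y_k)$, so that $R(\param)$ is exactly the average over all points of the difference between the integrand evaluated at $(z_i^j, y_i)$ and at the centroid $(\bar z_k^j, \bar y_k)$.

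Define the matrix-valued function $g(z, y) := \nabla_{W_l}\ell(f_j(\param_j, z), y)$, which is $C^2$ in $(z, y)$ by hypothesis, and write $w_i := (z_i^j, y_i)$, $\bar w_k := (\bar z_k^j, \bar y_k)$. The second step is to expand $g$ around $\bar w_k$ to first order with an integral remainder, namely $g(w_i) - g(\bar w_k) = \nabla g(\bar w_k)[w_i - \bar w_k] + \rho_{i}$ with $\rho_i = \int_0^1 \big(\nabla g(\bar w_k + t(w_i - \bar w_k)) - \nabla g(\bar w_k)\big)[w_i - \bar w_k]\,dt$. The crucial observation is that, since $\bar w_k$ is the arithmetic mean of $\{w_i\}_{i\in\mathcal{C}_k}$, the linear term sums to zero over each cluster: $\sum_{i\in\mathcal{C}_k} \nabla g(\bar w_k)[w_i - \bar w_k] = \nabla g(\bar w_k)\big[\sum_{i\in\mathcal{C}_k}(w_i - \bar w_k)\big] = 0$. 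This is precisely why the error is governed by the quadratic within-cluster variance rather than by the cluster diameters.

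It then remains to bound the remainder. Since $\bar w_k + t(w_i - \bar w_k)$ stays in the convex hull $\mathcal{H}^j_k$ for $t \in [0,1]$, the estimate $\|\nabla g(\bar w_k + t(w_i - \bar w_k)) - \nabla g(\bar w_k)\| \le M_{l,j}(\param, \mathcal{C})\, t\,\|w_i - \bar w_k\|$ (obtained by bounding $\nabla^2 g$ by $M_{l,j}$ along the segment) gives $\|\rho_i\| \le \tfrac12 M_{l,j}(\param,\mathcal{C})\,\|w_i - \bar w_k\|^2$. Summing over clusters and points and applying the triangle inequality yields $\|R(\param)\| \le \frac{1}{N}\sum_{k}\sum_{i\in\mathcal{C}_k}\|\rho_i\| \le \tfrac12 M_{l,j}(\param,\mathcal{C})\,\WCSS_j(\mathcal{C})$, which is in fact slightly stronger than the claimed bound.

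The main obstacle I anticipate is purely notational rather than conceptual: one must carefully interpret $\nabla g$ and $\nabla^2 g$ as matrix-valued multilinear maps on the combined variable $(z, y)$ and fix compatible operator and Frobenius norms so that the multivariate Taylor remainder estimate and the definition of $M_{l,j}$ are consistent. Once these conventions are pinned down, the vanishing of the first-order term and the quadratic remainder bound are routine.
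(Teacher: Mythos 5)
Your proposal is correct and follows essentially the same route as the paper's proof: rewrite the full gradient as a cluster-wise average, Taylor-expand the matrix-valued map $\nabla_{W_l}\ell(f_j(\param_j,\cdot),\cdot)$ around each centroid, observe that the first-order terms cancel because centroids are arithmetic means, and bound the quadratic remainder by $\tfrac12 M_{l,j}(\param,\mathcal{C})\,\WCSS_j(\mathcal{C})$ using that the segments stay in the convex hulls $\mathcal{H}^j_k$. The only difference is that you use the integral form of the remainder where the paper uses the Lagrange form with an intermediate point $(\alpha_i^k,\beta_i^k)$; this is a cosmetic variation (if anything, your version is cleaner for matrix-valued maps, where a single Lagrange point is not strictly available), and both arguments yield the same bound with the extra factor $\tfrac12$.
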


\begin{proof}[Proof (sketch)]Here we provide a sketch of the proof and we refer to Appendix~\ref{Lemma3.3Proof} for a detailed argument.
    Fixed a single data point $(z_i^j,y_i)$ with $i\in\mathcal{C}_k$, we consider the Taylor expansion in $(\bar{z}_k^j,\bar{y}_k)$ of $\nabla_{\layer_l}\ell(f_j(\param_{j},z_i^j),y_i)$ with Lagrange remainder.
    Finally, we note that the first-order contributions sum to zero because of the definitions of the centroids, yielding the desired expressions.
\end{proof}

As a consequence of the above lemma, for any stationary point, we can control the distance of the $l$-th layer from the set of matrices having rank less than or equal to $K$ in terms of the $K$-th total cluster variation of the first $l-1$ intermediate outputs.
Before stating our main result, we define the set of $d_1\times d_2$ matrices having rank equal to $K$, $\matrices_K^{d_1,d_2}:=\{A\in \R^{d_1\times d_2}|\;\rank(A)= K\}$.
We will omit the dimension-related superscripts when clear from the context.

\begin{theorem}[Small within-class variability yields low-rank bias]\label{Theorem_best_rank_k-approx_bound}
    Assume $\param^*$ to be a stationary point of $\Loss_\lambda(\Theta)$. Then, for any $l=1,\dots,L$ and $K\leq N$,
    \begin{equation*}   
     \min_{Z\in \cup_{r=1}^K\matrices_r}
    \|\layer^*_l - Z\|  \leq  \min_{j=1,\dots,l-1}
    \frac{M_{l,j}(\param^*)}{\lambda}\mathrm{TCV}_{K,j},
    \end{equation*}
    where $
    M_{l,j}(\param^*) := \!\!\!\!\!\displaystyle{ \sup_{(\alpha, \beta)\in \mathcal H^j} \!\!\|\nabla_{(z_j,y)}^2\nabla_{\layer_l}\\ \loss(f_j(\param^{*}_j,z^j), \alpha, \beta)\|}$
    and $\mathcal H^j:=\mathrm{ConvHull}\{(z^j_i,y_i)\}_{i=1}^N$.
\end{theorem}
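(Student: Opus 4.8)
The plan is to combine the critical-point equation \eqref{Eq_critical_point_eq_regularized_loss} with the centroid approximation of \Cref{Lemma_approx__fullgradient_by_centroid_based_loss_gradient} to produce, for each index $j<l$, an explicit matrix of rank at most $K$ that lies close to $\layer_l^*$. First I would fix the layer $l$, the rank budget $K\leq N$, and an index $j<l$, and let $\mathcal{C}^*$ be a partition of $\{(z_i^j,y_i)\}_{i=1}^N$ into at most $K$ sets attaining the minimum in \eqref{DEF_TCV}, so that $\WCSS_j(\mathcal{C}^*)=\TCV_{K,j}$.

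Next I would exploit stationarity. By \eqref{Eq_critical_point_eq_regularized_loss} we have $\layer_l^*=-\tfrac{1}{\lambda}\grad_{\layer_l}\Loss_0(\param^*)$, and applying \Cref{Lemma_approx__fullgradient_by_centroid_based_loss_gradient} with the partition $\mathcal{C}^*$ splits this gradient as $\grad_{\layer_l}\Loss_0(\param^*)=\grad_{\layer_l}\Loss_0^{j,\mathcal{C}^*}(\param^*)+R(\param^*)$ with $\|R(\param^*)\|\leq M_{l,j}(\param^*,\mathcal{C}^*)\,\WCSS_j(\mathcal{C}^*)$. Defining $Z:=-\tfrac{1}{\lambda}\grad_{\layer_l}\Loss_0^{j,\mathcal{C}^*}(\param^*)$ gives $\layer_l^*-Z=-\tfrac{1}{\lambda}R(\param^*)$, so the theorem reduces to two verifications: that $Z$ has rank at most $K$, and that $M_{l,j}(\param^*,\mathcal{C}^*)$ is dominated by the global constant $M_{l,j}(\param^*)$ of the statement.

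For the rank I would note that $\Loss_0^{j,\mathcal{C}^*}$ is a weighted sum of $K$ per-centroid losses $\pi_k\loss(f_j(\param_j,\bar z_k^j),\bar y_k)$; by the invariance identity \eqref{eq_Intermediate_gradients_stay_invariant} its gradient with respect to $\layer_l$ is a gradient of an intermediate loss evaluated on the batch of $K$ centroids, so \Cref{Prop_rank_of_lossgradient_at_single_data} yields $\rank\big(\grad_{\layer_l}\Loss_0^{j,\mathcal{C}^*}(\param^*)\big)\leq K$, whence $Z\in\bigcup_{r=1}^K\matrices_r$. For the constant I would observe that each cluster hull $\mathcal H_k^j$ is contained in the global hull $\mathcal H^j$, so the cluster-wise supremum defining $M_{l,j}(\param^*,\mathcal{C}^*)$ is taken over a subset of the domain defining $M_{l,j}(\param^*)$, giving $M_{l,j}(\param^*,\mathcal{C}^*)\leq M_{l,j}(\param^*)$. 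Putting these together, $\min_{Z'\in\cup_r\matrices_r}\|\layer_l^*-Z'\|\leq\|\layer_l^*-Z\|=\tfrac{1}{\lambda}\|R(\param^*)\|\leq\tfrac{M_{l,j}(\param^*)}{\lambda}\TCV_{K,j}$, and minimizing over $j=1,\dots,l-1$ closes the argument.

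Since \Cref{Lemma_approx__fullgradient_by_centroid_based_loss_gradient} already carries the analytic heavy lifting, I expect the main obstacle to be careful bookkeeping rather than a new estimate: checking that the centroid-based gradient genuinely has rank at most $K$ via \eqref{eq_Intermediate_gradients_stay_invariant}, and confirming that the cluster-wise constant of the lemma is correctly bounded by the global quantity $M_{l,j}(\param^*)$ appearing in the theorem.
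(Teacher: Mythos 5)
Your proposal is correct and follows essentially the same route as the paper's proof: stationarity to write $\layer_l^*=-\tfrac{1}{\lambda}\grad_{\layer_l}\Loss_0(\param^*)$, Lemma~\ref{Lemma_approx__fullgradient_by_centroid_based_loss_gradient} to replace the gradient by the rank-$\leq K$ centroid-based gradient plus a remainder, the hull inclusion $\mathcal H_k^j\subseteq\mathcal H^j$ to dominate the cluster-wise constant by $M_{l,j}(\param^*)$, and optimization over $j$ and the partition (you fix the TCV-optimal partition up front, the paper minimizes over partitions at the end — an immaterial reordering). Your explicit verification of the rank bound via Proposition~\ref{Prop_rank_of_lossgradient_at_single_data} and \eqref{eq_Intermediate_gradients_stay_invariant} is exactly the fact the paper invokes, just spelled out in more detail.
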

\begin{proof}[Proof (sketch)]
Observe that from \eqref{Eq_critical_point_eq_regularized_loss}, at the stationary point we have 
    \begin{equation}\label{Thmeq_critical_point_eq}
        \layer^*_l=-\frac{1}{ \lambda}\grad_{\layer_l}\Loss_0(\param^*).
    \end{equation}
    Now, Lemma \ref{Lemma_approx__fullgradient_by_centroid_based_loss_gradient} allows us to control the distance of the gradient in the right-hand side of \eqref{Thmeq_critical_point_eq} from $\matrices_K^{d_{l-1},d_l}$. In particular, given $\mathcal{C}\in \cup_{r=1}^K\mathcal{P}_r$, for any $j=1,\dots,l-1$ we see that
    \begin{equation*}
    \begin{aligned}
    \min_{Z\in \cup_{r=1}^K\matrices_r} \|\layer^*_l - Z\|
    &\leq \frac{\|\grad_{W_l} \Loss_0(\param^*)- \grad_{W_l} \Loss_0^{j,\mathcal{C}}(\param^*)\|}{\lambda}
    \leq\frac{M_{l,j}(\param^*,\mathcal{C})\WCSS_{j}(\mathcal{C})}{\lambda},
    \end{aligned}
    \end{equation*}
    where we have used that $\grad_{W_l} \Loss_0^{j,\mathcal{C}}(\param^*)/\lambda$ has rank smaller than or equal to the number of families of $\mathcal{C}$.
    Then, first minimizing over $j=1,\dots,l-1$, second bounding from above $M_{l,j}(\param^*,\mathcal{C})\leq M_{l,j}(\param^*)$ for any $j$, and third minimizing over all the possible partitions $\mathcal{C}\in  \cup_{r=1}^K\mathcal{P}_r$ yield the thesis.
\end{proof}

\Cref{Theorem_best_rank_k-approx_bound} shows that an intermediate neural collapse, \Cref{def:DNC1}, combined with weight decay regularization leads to approximately low-rank network layers, where the numerical ranks effectively depend on the $\TCV$ of intermediate layers. Its proof can be found in \Cref{Theorem3.4Proof}. 

Notice that the vanishing property of $\TCV$ in intermediate layers ($\DNC$) has already been studied for simplified models in, e.g.,  \citep{Poggio_hidden_nc,sukenik2024deep,sukenik2024neural,jacot2025wide}. We therefore notice here the special role of $\DNC$ with respect to the other $\mathrm{DNC}$ properties commonly studied, such as the ones presented in \citep{sukenik2024deep}: the first property alone implies that the stationary points in parameter space must have low-rank, as predicted by \Cref{Theorem_best_rank_k-approx_bound}.

In the next section, 
we will prove that, in a fairly general setting, $\DNC$ is optimal for all intermediate layers: this means that under a set of natural constraints, a zero total cluster variation for all intermediate layers is globally optimal in the small $\lambda$ regime. This shows, in combination with \Cref{Theorem_best_rank_k-approx_bound}, that all global minima in this regime are low rank. Moreover, we will show that for almost every initial condition, there is essentially no loss barrier to configurations with $\TCV = 0$ for all $l=1,\dots,L$, shedding new light on the emergence deep neural collapse and, in turn, on the emergence of low-rank bias. 
%
%
%
\section{Vanishing $\TCV$: Global Optimality of Deep-Neural Collapse}\label{sec:main_DNC}
We start by defining the \emph{representation cost} functional for a biasless network with $L^2$ regularization in all layers but the first one:
\begin{equation}\label{eq:representation_cost_1}
    \LimitLoss(\Theta) := \begin{cases}
    +\infty, \quad \text{if}\,\, f(\Theta;X) \ne Y, \\
    \frac{1}{2}\sum_{l = 2}^{L-1} \|W_{l+1} \|^2, \quad \text{otherwise}.
\end{cases}
\end{equation}

This functional has been studied in a variety of different settings, both linear \citep{Ranzato_2021} and nonlinear \citep{jacot2023bottleneck,jacot2023implicit}.
While in the deep linear case the explicit form of the minima of $\LimitLoss$ are known to be Shatten quasi-norms \citep{Ranzato_2021, ongie2022role}, in the general case a full characterization is still missing. In \citep{jacot2023implicit,jacot2023bottleneck}, the authors were able to characterize, in the large depth regime, how the minimal value essentially depends on the data $y = f^*(x)$ one is interpolating, essentially showing that the first order term scales with $L$ and it is a function bounded between the Jacobian and the bottleneck rank of the function $f^*$.

The general intuition is that $\LimitLoss(\Theta)$ represents the loss of interpolating networks in the small $\lambda$ regime, in fact from \citep{Scagliotti_2022}, we have that $\lambda^{-1}\mathcal L_\lambda \underset{\lambda \to 0}{\to} \LimitLoss(\Theta)$ in the $\Gamma$-convergence sense (\Cref{def:Gamma_convergence}), meaning that minimizers of $\mathcal{L}_{\lambda}$ converge to minimizers of $\LimitLoss$ as $\lambda$ goes to zero, see \Cref{prop:convergence_of_minimizers}.
Now, using the observation done in \citep{jacot2022featurelearningl2regularizeddnns}, we can reparameterize the weight matrices $\{W_l\}$ using the feature matrices in the following way:
\begin{equation}\label{reformulated_weight_matrices}
Z^{l+1} = \sigma_{l+1}(W_{l+1}Z^l) \iff W_{l+1} = \sigma_{l+1}^{-1}(Z^{l+1})Z^{l,+},
\end{equation}
where $Z^{l,+}$ denotes the Moore-Penrose pseudoinverse of the feature matrix $Z^l$. If we want to reformulate the representation cost function in \eqref{eq:representation_cost_1} based on the equivalence discussed in \eqref{reformulated_weight_matrices} we have to additionally take into account for a constraint relating the different intermediate representations, literarily $Z^{l}$ and $Z^{l+1}$ have to satisfy the following equation
\begin{equation}\label{constraintI}
    \sigma_{l+1}^{-1}(Z^{l+1})Z^{l,+} Z^l = \sigma_{l+1}^{-1}(Z^{l+1}) \qquad \forall l=0,\dots, L-1.
\end{equation}
Note that the constraint in \eqref{constraintI}
is effectively a representability condition on the features, equivalent to $W_{l+1}Z^{l} = \sigma_{l+1}^{-1}(Z^{l+1})$.
Given this new representation of the weight matrices, we can reformulate the representation cost as a function of the feature matrices $Z^l$, namely:
\begin{equation}\label{eq:representation_cost_2}
    \LimitLoss(Z) := \begin{cases}
    +\infty, \quad \text{if}\,\, Z^L \ne Y, \\
    \frac{1}{2}\sum_{l = 1}^{L-1} \|\sigma_{l+1}^{-1}(Z^{l+1})Z^{l,+}\|^2,\quad \text{otherwise}.
\end{cases}
\end{equation}
As proven in \citep{jacot2022featurelearningl2regularizeddnns}, it is not restrictive to study this reparametrized representation cost in the sense that the local minimizers of \eqref{eq:representation_cost_1}, subject to the representability constraints \ref{constraintI}, are in bijection with those of \eqref{eq:representation_cost_2}, therefore making them equivalent. The key observation is that the objective in \eqref{eq:representation_cost_2} is local, i.e., it is a sum of local objectives $\LimitLoss_l = \LimitLoss_l(Z^{l+1},Z^l)$ depending just on neighbour representations. Therefore, since $\|W_1 \|^2$ is not present in the regularization, $\LimitLoss$ depends on $Z^1$ just through $\LimitLoss_1(Z^2,Z^1) = \frac12\|\sigma_2^{-1}(Z^2)Z^{1,+} \|^2$, which can be minimized explicitly with respect to $Z^1$,  as a function of the next representation $Z^2$. For this reason, the lack of the first layer in the regularization term is the key to obtaining the results in the following section, as it makes it possible to study recursively the structure of minima layer by layer.
To study it, we consider the additional hypothesis that $Y$ is in one-hot encoding form, i.e. $y_j=e_{i_j}$ for any $j$, where $\{e_i\}$ is the canonical basis of $\mathbb{\R}^c$.
We also point out that, without considering additional constraints, the minima of $\LimitLoss$ may not exist. 
To avoid these situations, we study the minima of $\LimitLoss$ subject to additional constraints.

\subsection{Characterizing Optimal Representations: Natural Architectural Constraints}
Before stating \Cref{thm:main_DNC_optimal_ff}, we introduce three sets of constraints that capture essential properties of 
realizable representations in actual neural networks. These constraints are a way to encode properties of the underlying architecture in the space of representable intermediate embeddings. In particular, this allows one to view dually the neural network under consideration as a pure constraint on the space of intermediate representations, which forces the embedding of one layer $Z^l$ to be linked to that of the embedding before $Z^{l-1}$ through the imposed architecture. We first start with the definition of the constraints.
\begin{definition}[Constraints] \label{eq:constraint}
    Consider the set $\mathcal{S} = \mathcal S(Z^1)$ consisting of all networks $Z = (Z^2,\dots,Z^L)\in \mathbb R^{n_2 \times N} \times \dots \times \mathbb R^{n_L \times N}$ that satisfy the following conditions \ref{cond1}-\ref{cond2}-\ref{cond3}:
\begin{enumerate}[label=(\Roman*), noitemsep, topsep=0pt, leftmargin=*]
    \item\label{cond1} $\sigma_{l+1}^{-1}(Z^{l+1})Z^{l,+} Z^l = \sigma_{l+1}^{-1}(Z^{l+1}), \quad \forall l=1,\dots, L-1$;
    \item\label{cond2} $s_i(Z^l) \leq C_{l+1}s_i(\sigma_{l+1}^{-1}(Z^{l+1})),\quad   \forall i=1,\dots, r_l,\quad   \forall l=2,\dots, L-1$,
    \item\label{cond3} $\rank(\sigma_l^{-1}(Z^l)) \geq \rank(Y)$,
\end{enumerate}
where $C_l>0$ are arbitrary, $r_l$ is the rank of $\sigma_{l+1}^{-1}(Z^{l+1})$, $s_i(A)$ denotes the $i$-th singular value of a matrix $A$ and $\sigma_2,\dots,\sigma_L$ is a set of entrywise omeomorphisms with $\sigma_l(x) \ne 0$ for all $x \ne 0$.
\end{definition}
\paragraph{Constraint~\ref{cond1}: representability.} For any realizable sequence of representations $Z^1,\dots, Z^L$, each representation $Z^{l+1}$ must satisfy $Z^{l+1} = \sigma_l(W_l Z^l)$ 
for some weight matrix $W_l$.
When we reparameterize the representation cost functional in \eqref{eq:representation_cost_1} in terms of the features embeddings (see \eqref{eq:representation_cost_2}), the constraint \ref{cond1}, by coupling the embeddings $z^l$ and $Z^{l+1}$, expresses this representability requirement. 
 We remark that this constraint does not restrict the optimization problem further, as it was shown in \citep{jacot2022featurelearningl2regularizeddnns} that the stationary points of \eqref{eq:representation_cost_1} are in bijection with the ones of \eqref{eq:representation_cost_2} with constraint~\ref{cond1}. 
 \paragraph{Constraint~\ref{cond2}: effect of nonlinearity.}
 The second constraint~\ref{cond2} can be directly related to the effect of the nonlinear activations, and it represents exactly the situation for diagonal feature matrices, for which the nonlinearity acts directly on the singular values. The necessity of constraint~\ref{cond2} for the theoretical analysis relies on making it possible to study the alignment of the optimal basis, avoiding any possible singular value escaping to infinity. Interestingly, notice that while in \ref{cond2} $C$ is an arbitrary positive constant, we observe numerically in \Cref{sec:num-exp} that a perfect fit happens for $C = 1$, reminiscent of a norm balancing condition on all the layers.
 \paragraph{Constraint \ref{cond3}: excluding lower-than rank $K$ solutions.}
 The third set \ref{cond3} is introduced here as it arises naturally in the proof of the theorem as a consequence of \ref{cond2}, and it allows us to rule out the presence of global minima of lower rank. This is because, as we will show in the proof in \Cref{sec:ff_global}, all global minima satisfy the equality constraint.  We notice that, as a byproduct of \Cref{thm:main_DNC_optimal_ff}, we know that if we get rid of constraint \ref{cond3}, then all global minimizers either satisfy $\DNC$ or some layer embedding has rank strictly smaller than $K$. We expect these minimizers to be exceptional in some sense, as they were proven to be optimal in a very particular setting in \citep{sukenik2024neural}. We also remark that the singular values structure is also predicted recursively by \Cref{thm:main_DNC_optimal_ff}, which we will numerically show also in \Cref{sec:num-exp}.

We highlight here that constraint \ref{cond2} corresponds to a constraint just on the nonzero singular values of the matrices $Z^l$ and $\sigma_{l+1}^{-1}(Z^{l+1})$. Therefore, \ref{cond2} allows the case in which all feature matrices $Z^l$ for $l=1,\dots,L-1$ are full-rank. Moreover, $\mathcal S$ is bounded but not compact because of constraint (III), making it nontrivial to prove the existence of a minimizer. The following main result on the minimizers of a constrained version of \eqref{eq:representation_cost_2} holds:

\begin{theorem}[$\DNC$ is optimal for constrained representation cost]\label{thm:main_DNC_optimal_ff}
Let $\sigma_2,\dots,\sigma_L$ be a set of entrywise homeomorphisms and $\sigma_1$ an analytic non-polynomial function, with $\sigma_l(x) = 0 $ if and only if $x = 0$ for every $l =1, \ldots, L$. Assume $K = n_L \leq \dots \leq n_{1} = N \leq 2^{n_1-1}$.
Let $\mathcal S \subset \mathbb R^{n_1 \times N} \times \dots \times \mathbb R^{n_L \times N}$ be as in \Cref{eq:constraint}.
Then, for almost every $Z^1$, the global minima of the optimization problem
\begin{equation*}
\min_{Z \in \mathcal S} \LimitLoss(Z) 
\end{equation*}
satisfy $\DNC$ for all intermediate layers $l = 1,\dots,L$. Moreover, any global minimizer $\bar{Z}$ satisfies 
\[
\bar Z^{l} \propto O_l\sigma_{l+1}^{-1}(\bar Z^{l+1}), \quad O_l \in \R^{n_{l} \times n_{l+1}},  \;O_l^\top O_l = I, \quad \forall \,l=2,\dots,L-1.
\]
\end{theorem}
\begin{proof}[Proof (sketch)]
    First, we consider the set of $Z^1 \in \mathbb R^{N \times N}$ such that $\rank(Z^1) = N$, which is a generic condition as shown in  \Cref{lemma:large_layer_lindip}. Now, fixed $Z^1$ satisfying this condition we look at the minimizers in the $Z^2$ variables, subject to the constraints in \Cref{eq:constraint}. Since regularization on the second term $W_2$ is not present, there is just a single term involving $Z^2$, leading to the following:
    \begin{align*}
    & \min_{Z^2 \in \mathbb R^{n_2 \times N}}\|\sigma_3^{-1}(Z^3)Z^{2,+} \|^2\\
    & \qquad \text{s.t.}\quad 
    \sigma_{3}^{-1}(Z^3) Z^{2,+}Z^2 = \sigma_{3}^{-1}(Z^3) \\
    &\qquad\qquad \;\sigma_{2}^{-1}(Z^2)Z^{1,+}Z^1 = \sigma_{2}^{-1}(Z^2) \\
    &\qquad \qquad \; s_{i}(Z^{2}) \leq C_2 s_i(\sigma_{3}^{-1}(Z^3)). 
    \end{align*}
Notice that, since $Z^1$ is invertible, the second representability constraint is trivially satisfied, as $Z^{1,+}Z^1 = I_N$.
Therefore, thanks to \Cref{lemma:ff_lemma_svals}, we can describe exactly the global minimizers of this optimization problem as a function of $\bar Z^2 \propto O_2 \sigma_3^{-1}(Z^3)$. By substituting this expression in $\LimitLoss$, we can do recursively the same for $Z^3$, by using the third constraint on the rank of representations.
\end{proof}

\begin{remark}
For almost every $Z^1$ is intended in the sense of pushforward Lebesgue measure, i.e., for almost every $(X,W_1) \in \R^{n_0 \times N} \times \R^{n_1 \times N}$.
\end{remark}

A rigorous formulation and proof of \Cref{thm:main_DNC_optimal_ff} involves multiple technical lemmas, and can be found in \Cref{sec:ff_global}. 

\Cref{thm:main_DNC_optimal_ff} can be reinterpreted as follows: if we allow the architecture to collapse already at the first layer, that is, if the kernel of $W_1$ can be large enough (i.e., $n_1 \geq N$), then, under \ref{cond1}--\ref{cond3}, it is optimal to collapse all intermediate layers. Notice that the recursive nature of \Cref{thm:main_DNC_optimal_ff} describes all the optimal intermediate embeddings $Z^l$ recursively as a function of $\sigma_{L}^{-1}(Z^L)$, both in terms of singular values and singular vectors. As an example, $Z^{L-1}$ has to be proportional to a rotated version of $\sigma_{L}^{-1}(Y)$, and therefore we know singular values and vectors exactly. Therefore, by going backward recursively, we can recover the structure of all of the previous layers.

We emphasize that \Cref{thm:main_DNC_optimal_ff} \emph{can be extended to residual networks under a weaker set of assumptions}. Indeed, in that case, we can
get rid of all constraints, except the natural one in \ref{cond1}. Given the similarity of the key ideas, we present this result along with its proof in \Cref{sec:resnet_global}.

In the case of feedforward networks, with a stronger request on the intermediate widths, we have the following additional main result, proving that for almost every interpolating set of parameters, there exists a continuous path in parameter space along which the loss decreases, finally converging to globally optimal points, which satisfy $\DNC$ for \Cref{thm:main_DNC_optimal_ff}.
\begin{theorem}[Benign loss landscape: monotonic paths to $\DNC$]\label{thm:main_reachability_nc}
Assume the intermediate widths in $\mathcal{S}$ satisfy the condition $K \leq n_L = n_{L-1}  = \dots = n_1 = N$ and that the $\sigma_l$'s are analytic. Then, for almost every initial condition $Z(0) \in \mathcal S$, there exists a  continuous path $\gamma(t)$ such that:
\begin{enumerate}
    \item $\gamma(0) = Z(0)$ and $\gamma(1)$ is a global minimum of $\LimitLoss$;
    \item $\LimitLoss \circ \gamma$ is continuous Lebesgue-almost everywhere (except for a finite set of times), with all discontinuities $t_1,\dots,t_m$ removable, i.e., $\underset{t \to t_i^-}{\lim} \LimitLoss(\gamma(t)) = \underset{t \to t_i^+}{\lim} \LimitLoss(\gamma(t)) > \LimitLoss(\gamma(t_i)), \quad \forall i = 1,\dots,m;$
    \item for all continuity times $s\geq t$ we have $\LimitLoss(\gamma(t)) \geq \LimitLoss(\gamma(s)).$
\end{enumerate}
\end{theorem}

\begin{proof}[Proof (sketch)]
    As in \Cref{thm:main_DNC_optimal_ff}, we consider the set of $Z^1 \in \mathbb R^{N \times N}$ such that $\rank(Z^1) = N$, which is a generic condition as shown in  \Cref{lemma:large_layer_lindip}. Now, fixed $Z^1$ satisfying this condition we look at the minimizers in the $Z^2$ variables. Thanks to \Cref{lemma:descent_Q}, there exists an energy decreasing path from the initial condition to the global minimizer in $Z^2$. We recursively concatenate this with the optimal path in $Z^3$, where $Z^2$ follows the central path dynamics (staying optimal as in \citep{han2022neuralcollapsemseloss}). At the end, we end up with the concatenation of $L$ loss decreasing paths in $\mathcal S$, which is therefore globally energy decreasing and connects the initial condition to the constrained global minimizers.
\end{proof}

The full proof of \Cref{thm:main_reachability_nc} can be found in \Cref{subsec:ff_reachability}. The main takeaway is that, from almost every interpolating initial condition (aside from a set of measure zero), one can construct a continuous path in parameter space along which the loss decreases (up to a finite set of downward jumps) and the network converges to a global minimum characterized by deep neural collapse.
In particular, \Cref{thm:main_reachability_nc} implies that 
\[
\inf_{\gamma \in \Gamma} \sup_{t \in [0,1]} \LimitLoss(\gamma(t)) \leq \LimitLoss(Z(0)),
\]
where $\Gamma:= \{\gamma \in \mathcal C^0([0,1];\mathcal S) \,|\,: \gamma(0) = Z(0),\gamma(1)\, \text{satisfies}\,\, \DNC\,\, \forall l = 1,\dots,L \}$.

We highlight that in any numerical method, one looks at a discretized version of the path $\gamma$, so the ``probability" of seeing discontinuity points is zero, no matter how small the learning rate. In particular, no matter how small the stepsize, there always exists a sequence of points that decreases the loss and reaches $\DNC$ configurations. This result is exactly formalized in the following corollary, together with a more detailed explanation in \Cref{remark:reachability_finite_step}:
\begin{corollary}[No loss barrier for any finite stepsize]\label{cor:finite_step_reachability}
    In the setting of \Cref{thm:main_reachability_nc}, for almost every $Z(0) \in \mathcal S$ there exists
 a sequence of curves $\gamma_k:[0,1] \to \mathcal S$ uniformly converging to the continuous curve $\gamma$ of \Cref{thm:main_reachability_nc} such that the following hold for all $k \geq 1$:
    \begin{itemize}
        \item $\gamma_k$ is piecewise constant in a set of disjoint intervals $\{ I_{j,k}:=[T_{j,k},T_{j+1,k}) \}_{j=0,\dots,k-1}$ whose union is $[0,1]$, and with $0<T_{j+1,k}-T_{j,k}<\frac{1}{k}$ for all $j = 0,\dots,k-1$;
        \item $\gamma_k(0) = Z(0)$, $\gamma_k(1)\in \underset{Z \in\mathcal S}{\arg\min} \,\LimitLoss(Z)$
        \item $\LimitLoss$ is decreasing, i.e., $\LimitLoss(\gamma_k(0))\geq \LimitLoss(\gamma_k(T_{1,k})))\geq \LimitLoss(\gamma_k(T_{2,k}))) \geq \dots \geq \LimitLoss(\gamma_k(T_{k,k}))) \geq \LimitLoss(\gamma_k(1)).$
    \end{itemize}
\end{corollary}
The idea of the proof \Cref{cor:finite_step_reachability} relies essentially on a discretization of the path from \Cref{thm:main_reachability_nc}, and can be found in \Cref{proof:finite_step_reachability}. \\

A second way to interpret \Cref{thm:main_reachability_nc} and \Cref{cor:finite_step_reachability} is that the loss landscape of $\LimitLoss$, despite being highly nonconvex, is well behaved in the sense that there are essentially no loss barriers from almost all interpolating points to global minima. This result, being independent of the specific temporal dynamics imposed during training, provides theoretical support for the empirical ``universality'' of neural collapse observed in classification tasks when using sufficiently wide architectures \citep{Poggio_hidden_nc} and possibly gives an hint on why it is observed towards the end of training \citep{papyan2020prevalence,Poggio_hidden_nc} (i.e., in the interpolating phase, where the landscape is well behaved).\\
Moreover, the phenomenon is ``unidirectional'', progressing from high to low rank as demonstrated in the proof, which is reminiscent of the behaviour observed in the deep linear case in \citep{wang2024implicit}. 

Finally, we point out that these results are recovered as the singular limit in the regime where an arbitrarily small penalization term for the first layer $\|W_2\|^2= \| \sigma_2^{-1}(Z^2) Z^{1,+} \|^2$ is included. A natural question arises: are the highly structured solutions given by \Cref{thm:main_DNC_optimal_ff} preserved when $\|W_1 \|^2$ is included?
To answer this question about stability, we consider the following perturbed loss, where $\phi$ is a generic nonnegative lower-semicontinuous function defined on the set $\mathcal{S}$ (\Cref{eq:constraint}),  
\begin{equation}\label{eq:loss_regularized_on_1st_layer}
\LimitLoss_{\lambda}(Z) = \begin{cases}
    +\infty, \quad \text{if}\,\,\ Z^L \ne Y, \\
    \frac{1}{2}\sum_{l=1}^{L-1} \|\sigma_{l+1}^{-1}(Z^{l+1})Z^{l,+} \|^2 + \frac{1}{\lambda} \phi(Z), \quad \text{otherwise},
\end{cases}
\end{equation}
and show that, as $\lambda\to\infty$, the global minima of 
$\LimitLoss_{\lambda}$ converge to global minima of $\LimitLoss$.
Note that varying the value of $\lambda$ is equivalent to rescaling the dataset $X$ by a factor $\lambda$.
\begin{theorem}[Stability of the $\DNC$ global minima]\label{prop:main_gammaconv_firstlayer}

Assume $\phi:\mathcal S \to \R_+$ is lower semicontinuous. Then, the family of functionals $\LimitLoss_{\lambda}: \mathcal S \to \R$ converge to $\LimitLoss: \mathcal S \to \R$ as $\lambda \to +\infty$, in the $\Gamma$-convergence sense.
\end{theorem}
Moreover, as a classical consequence of $\Gamma$-convergence, if $Z^* \in \mathcal S$ is an accumulation point for a sequence of minimizers $Z_n \in \arg\min_{Z\in \mathcal{S}} \LimitLoss_{\lambda_n}(Z)$ with $\lambda_n \to +\infty$, then $\LimitLoss(Z^*) = \min_{Z\in \mathcal{S}} \LimitLoss(Z)$. 
The proof is included in \Cref{subsec:ff_reachability}. 
We also highlight here that the existence of an accumulation point $Z^*$ is not immediate, since $\mathcal S$ is not compact. We finally remark that an equivalent stability result holds even for residual networks, as discussed in \Cref{remark:resnet_stability}.
\begin{remark}
\Cref{prop:main_gammaconv_firstlayer} can be interpreted from two complementary perspectives. 
First, the optimality of $\DNC$ configurations established in \Cref{thm:main_DNC_optimal_ff,thm:NC1_optimal_resnet} remains stable under the inclusion of a backbone regularization term. Consequently, one expects the configurations across all layers, up to the first representative layer, to remain close to $\DNC$. In particular, such proximity to $\DNC$ configurations, even when the first-layer weight decay is incorporated, is expected to occur in the regime $\|X\| \gg \|Y\|$.\\
Alternatively, the parameter $\lambda$ in \Cref{prop:main_gammaconv_firstlayer} can also be interpreted as representing the network depth. To formalize this interpretation, define
\[
\LimitLoss^{[l_1,l_2]}_{L}(Z):=
\begin{cases}
+\infty, & \text{if } Z^L \neq Y,\\[3pt]
\frac{1}{L}\displaystyle\sum_{l=l_1}^{l_2}\phi_{l+1}(Z^{l+1},Z^l), & \text{otherwise.}
\end{cases}
\]
For $Z^L = Y$, consider the difference
\[
\LimitLoss^{[0,L-1]}_L - \LimitLoss^{[l^*,L-1]}_L 
= \LimitLoss^{[0,l^*-1]}_L 
= \frac{1}{L}\sum_{l=0}^{l^*-1} \phi_{l+1}(Z^{l+1},Z^l) 
\underset{\Gamma}{\longrightarrow} 0, \quad \text{as } L \to +\infty.
\]
Hence, in the large-depth regime, even in the absence of a pyramidal backbone structure, one expects the minimizers to exactly satisfy the $\DNC$ condition from the first layer $l^*$ for which the set of representable features $Z^{l^*}$ contains the minimizer predicted by \Cref{thm:main_DNC_optimal_ff}, with the special case of the UFM, where the set of representable matrices $Z^{l^*}$ coincides with $\R^{n_{l^*}\times N}$.\\
This observation elucidates that, in general, for sufficiently deep architectures, the structure of the minimizers asymptotically approaches that predicted in \Cref{thm:main_DNC_optimal_ff}, where $Z^2$ is effectively replaced by the minimal layer index $l^*$ such that $Z^{l^*}$ is \emph{representative enough}.
\end{remark}

%
\section{Experimental Evaluation} \label{sec:num-exp}
\subsection{Qualitative Behaviour of \Cref{Theorem_best_rank_k-approx_bound}}
In this section, we show in a simple setting the qualitative dependency of $\min_{Z\in \cup_{r=1}^K\matrices_r} \|\layer^*_l - Z\|$ 
on the regularization parameter $\lambda$ and on the TCV of the dataset. In particular, fixed a set of vectors $\mu_1,\dots,\mu_K$, we consider a random Gaussian mixture dataset $X \in \R^{n \times N}$ with $x_i \sim N(\mu_{y_i},\sigma^2)$, and a $Y \in \R^{K \times N}$ one-hot encoding matrix with $K=10$ classes. In this way, we know that $\TCV_{K,0} \sim O(\sigma^2)$, and we expect, from \Cref{Theorem_best_rank_k-approx_bound}, the ranks to approach $K = 10$ as $\sigma^2 \to 0$. Then, for each $(\lambda,\sigma)$ in a grid of values, we sampled $X,Y$ according to the distribution described above, and we trained a four-layer fully connected neural network with the loss defined in \Cref{Def_Loss_function}. In \Cref{fig:dependency_on_variance} we report the results of this experiment. We can observe that consistently, for each $\lambda$ in our grid, we have that all matrices are essentially of rank $K$ for $\sigma$ small enough. Moreover, we also observe the decreasing behavior with respect to $\lambda$ consistently with \Cref{Theorem_best_rank_k-approx_bound}.
\begin{figure}[t]
\centering \includegraphics[width = .70\textwidth, clip, trim=.18cm .33cm .2cm .25cm]{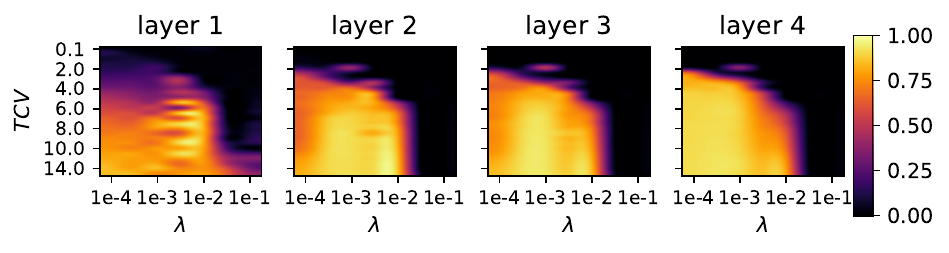}
\caption{For each couple $(\lambda,\sigma)$ we report the relative distance $\sum_{j>K}s_j^2/\sum_{j}s_j^2$ of trained weight matrices from the closest rank-$ K =10$. Here, $s_j$ are the singular values of each weight matrix.}
\label{fig:dependency_on_variance}
\end{figure}
\subsection{Convergence to the Optimal Solution of \Cref{thm:main_reachability_nc}}\label{subsec:convergence_to_svals}
We start by illustrating and numerically validating what we claimed in \Cref{thm:main_reachability_nc,prop:main_gammaconv_firstlayer}. In particular, by setting $\delta = 1/\lambda$ in \eqref{eq:loss_regularized_on_1st_layer} we can show that for $\delta \to 0$, the singular values of all the intermediate post-activation features are the ones described in \Cref{thm:main_DNC_optimal_ff}. To set up the experiment, we subsampled the MNIST dataset \citep{deng2012mnist} with $N = 200$ examples, and we considered a $5$ layer feedforward network with all $\sigma_l = \tanh$ activations and widths equal to $n_1 = \dots = n_4 = N  =200, n_5 = K = 10$. By including $\delta$, the loss becomes
$\mathcal L_{\lambda,\delta} = \frac{1}{2}\|f_{\Theta}(X)-Y \|^2 + \frac{\lambda}{2} \sum_{l=2}^{L} \|W_l \|^2 + \frac{\lambda \delta}{2} \|W_1 \|^2 + \frac{\lambda'}{2}\sum_{l=1}^L \|Z^l \|^2 $,
where the last regularization term can be thought of as a weaker way of imposing the second set of constraints in \Cref{thm:main_reachability_nc}. In particular, for this experiment we considered $\lambda = \lambda'  = 10^{-5}$.\\
We trained the network until convergence (way past interpolation, with a final data loss lower than $10^{-6}$ in all examples) for different values of $\delta$, and we compared the actual singular values of the feature matrices after training with the ones predicted by \Cref{thm:main_DNC_optimal_ff}. The results are shown in \Cref{fig:main_svals_prediction}. As we can notice, the singular values are converging to the ones predicted by \Cref{thm:main_DNC_optimal_ff}, and the convergence with respect to $\delta$ seems to be fast. In fact, already for moderate sizes of $\delta$ (e.g., $\delta = 2$), the singular values are close to the predicted ones. 

\begin{figure}[t]
\centering
\begin{tabular}{ccc}
$\delta = 10$ & $\delta = 2$ & $\delta = 1$ \\ 
\includegraphics[width=0.2\textwidth]{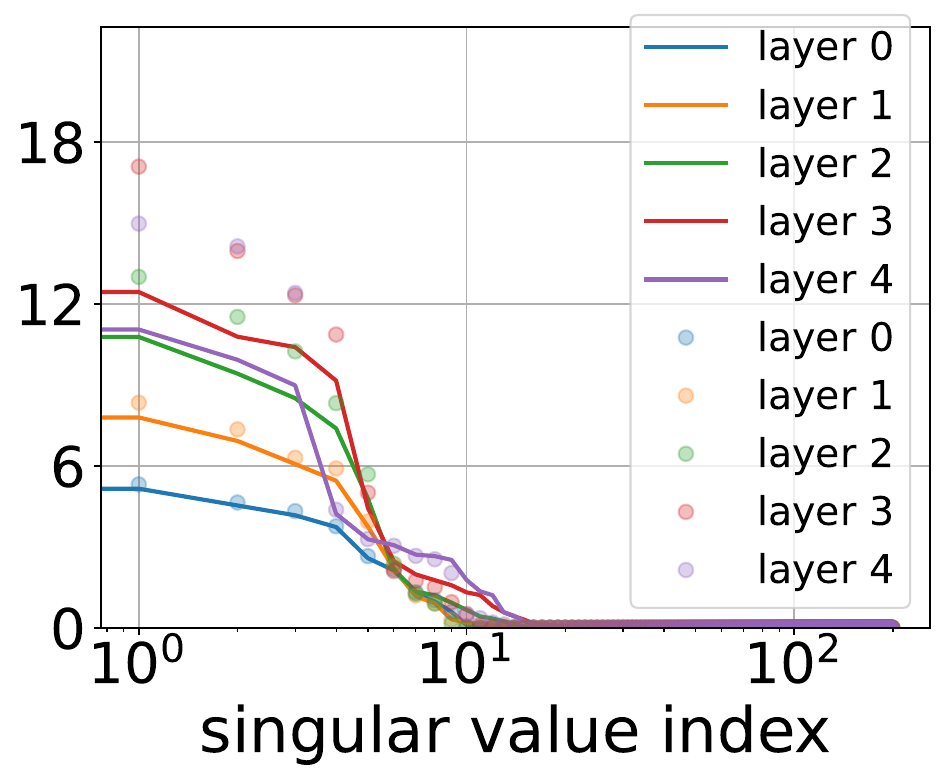} &   \includegraphics[width=0.2\textwidth]{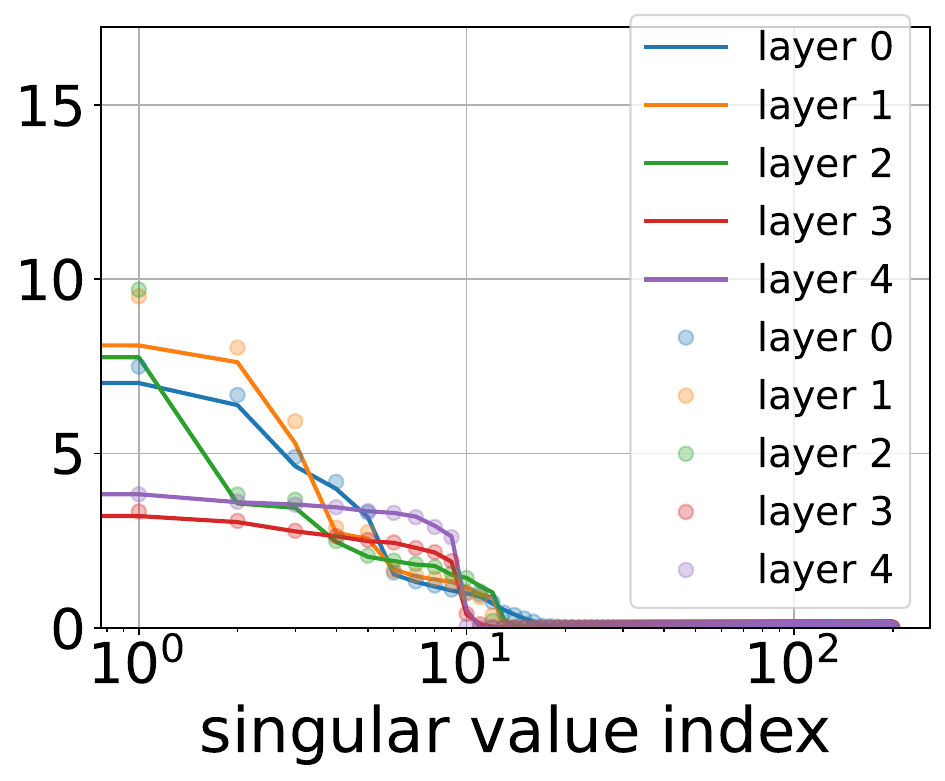} &
\includegraphics[width=0.2\textwidth]{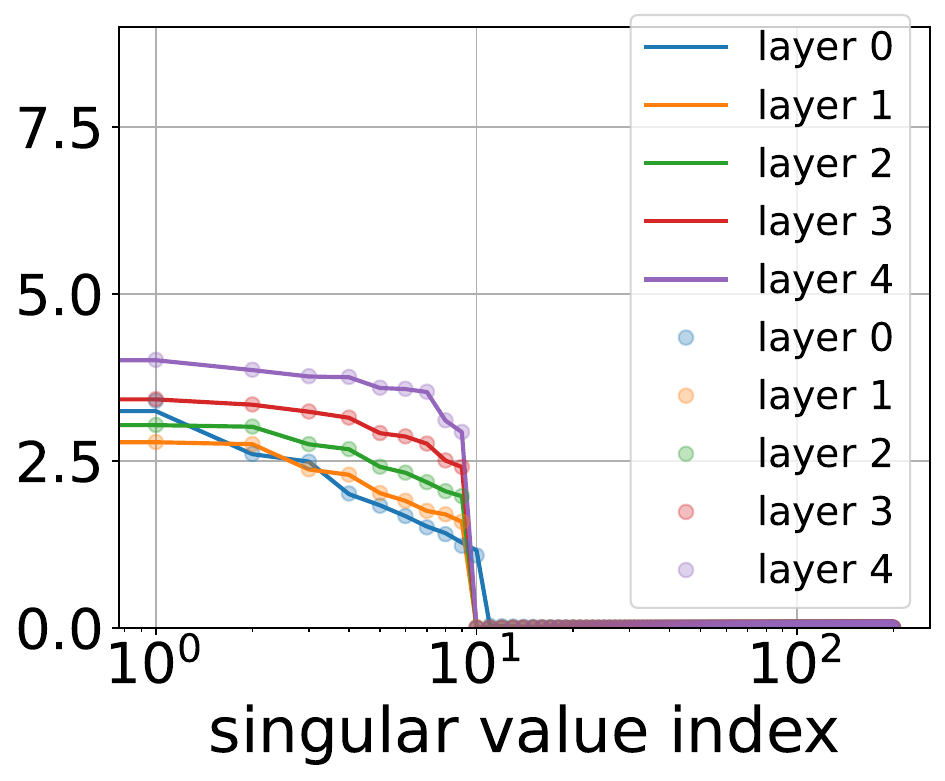} \\
$\delta = 10^{-1}$ & $\delta = 10^{-2}$ & $\delta = 0$ \\   
\includegraphics[width=0.2\textwidth]{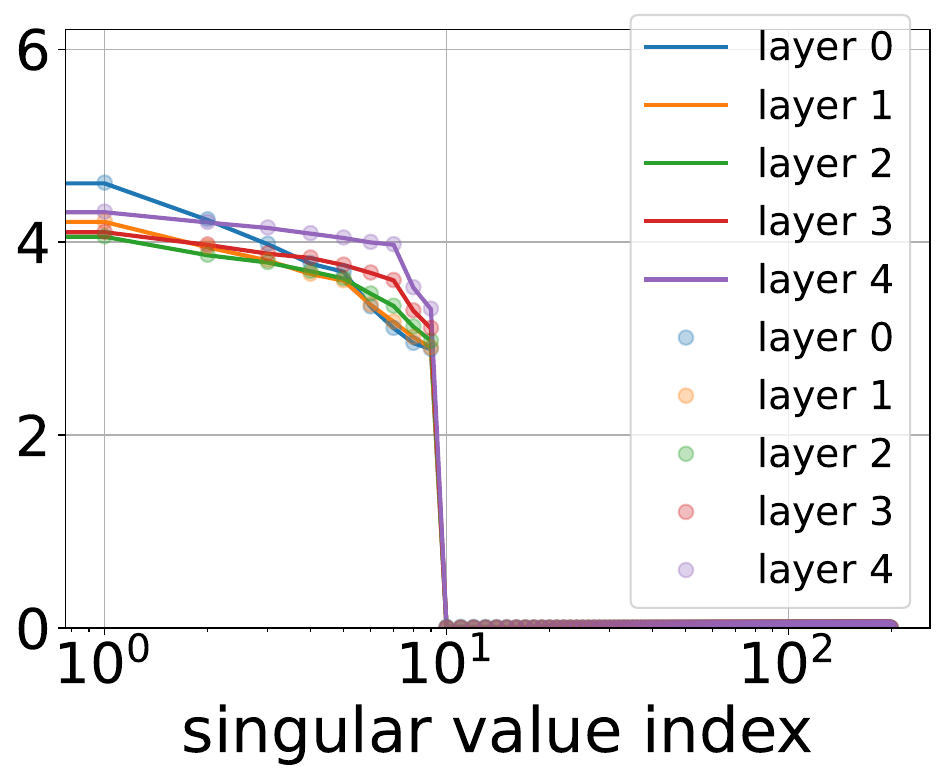} &
\includegraphics[width=0.2\textwidth]{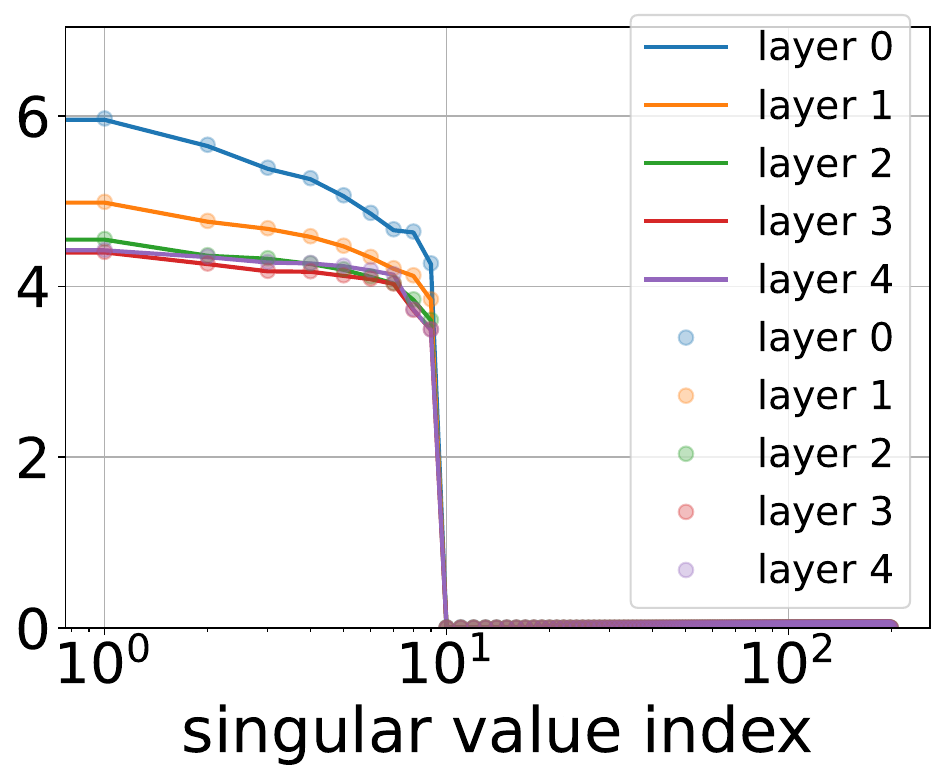} &   \includegraphics[width=0.2\textwidth]{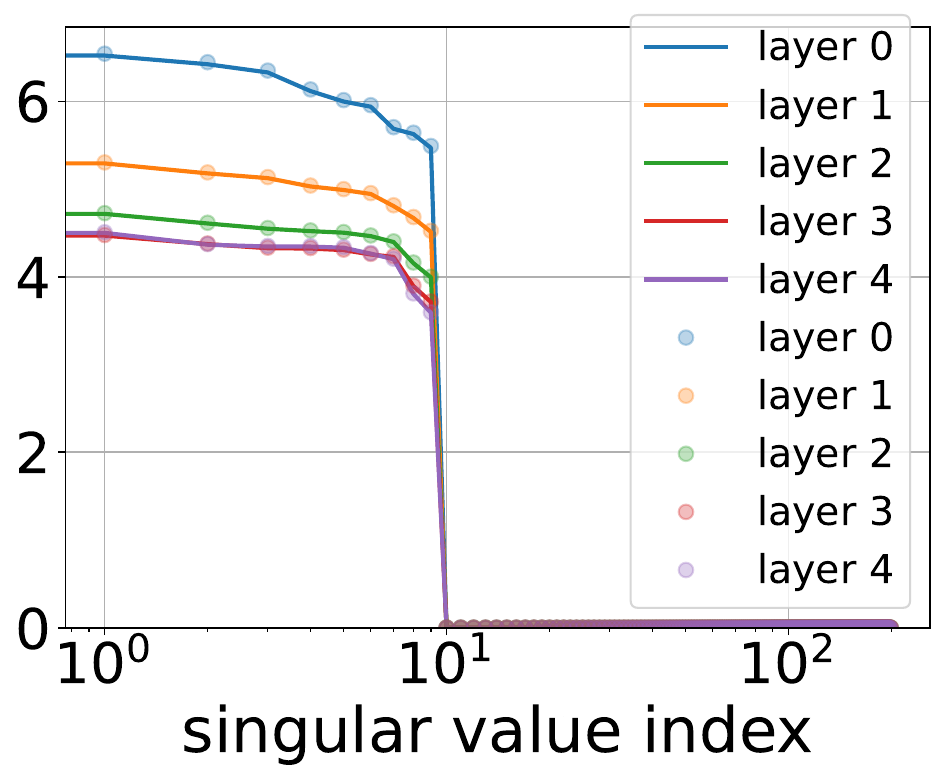} \\
\end{tabular}
\caption{Singular values of trained networks (full line) against those predicted by \Cref{thm:main_DNC_optimal_ff} (dots) for different values of $\delta = \frac{1}{\lambda} \in \{10,2,1,0.1,0.01,0\}$. The singular values are converging to the predicted ones as $\delta \to 0$, as shown by \Cref{prop:main_gammaconv_firstlayer} in combination with \Cref{thm:main_DNC_optimal_ff}.}
\label{fig:main_svals_prediction}
\end{figure}   

\subsection{Convergence to $\DNC$ Configurations}
In this experiment, we will showcase the convergence of $\TCV$ to zero of all intermediate layers in the setting of \Cref{thm:main_DNC_optimal_ff}. The experimental setting is the same as in \Cref{subsec:convergence_to_svals}, but with $L = 10$ layers and $\delta = 0$ fixed. In \Cref{fig:tcv_and_sranks_averaged} (rightmost figure), we plot for a single run the $\TCV$ of all intermediate post-activation embeddings. As we can observe from this first plot, the $\TCV$ of all intermediate embeddings goes essentially to machine precision as expected from \Cref{thm:main_DNC_optimal_ff,thm:main_reachability_nc}.
In \Cref{fig:tcv_and_sranks_averaged} (first three figures from the left), we instead plot the average $\TCV$ across all intermediate layers, for $100$ different random unitary initializations, to avoid slow convergence caused by the depth. As we can see, the $\TCV$ consistently goes to zero in all runs. Regarding the numerical ranks, we observe that they are all close to the optimal ones ($K = 10$).
%
\begin{figure}[t]
\centering
\resizebox{0.8\textwidth}{!}{
\bgroup
\setlength{\tabcolsep}{0.0mm}
\begin{tabular}{ccccc}
    \includegraphics[width=0.32\textwidth]{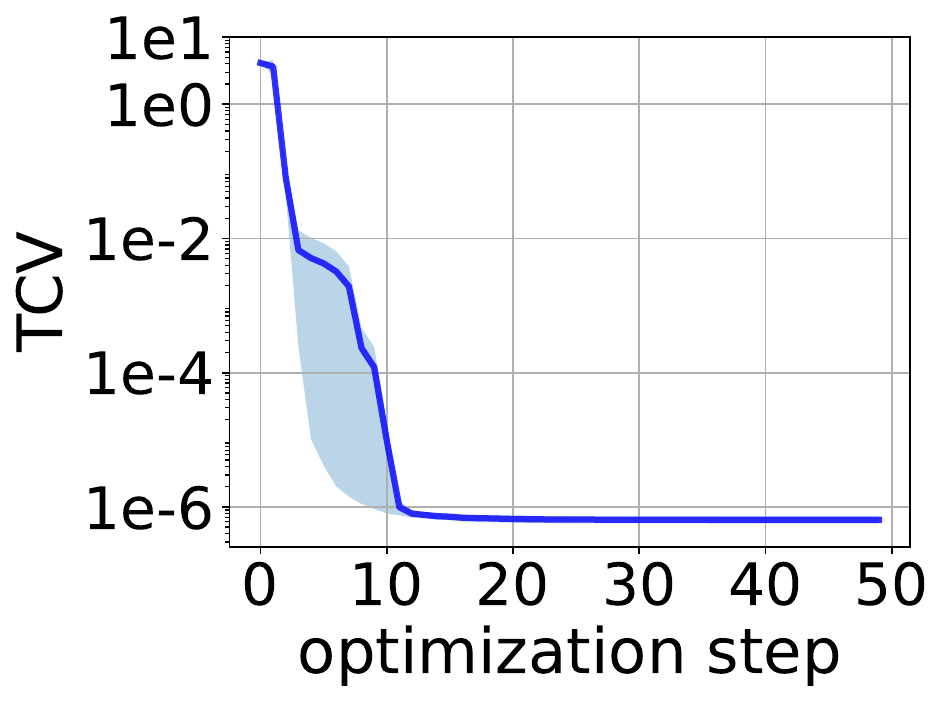}  &
     \includegraphics[width=0.315\textwidth]{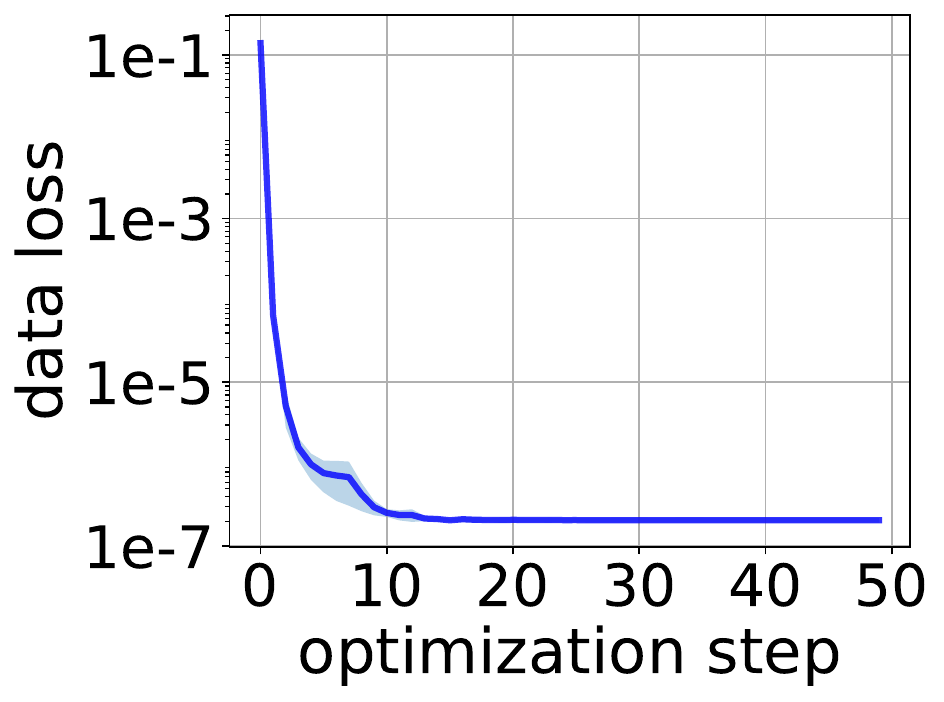}  &
     \includegraphics[width=0.32\textwidth]{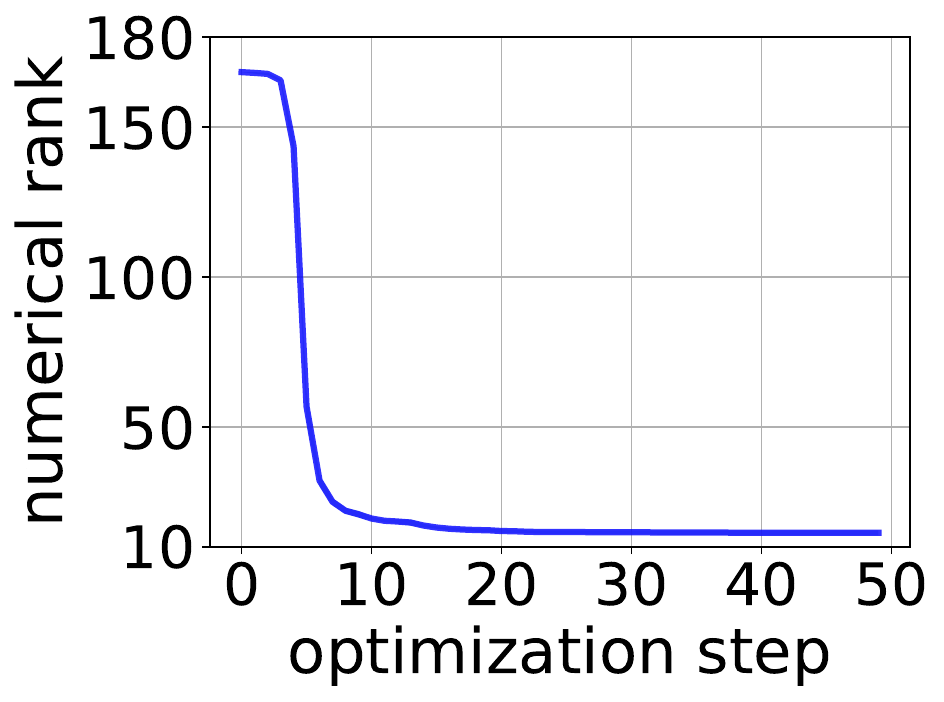} &

      $\qquad$
      &
      \includegraphics[width=0.32\textwidth]{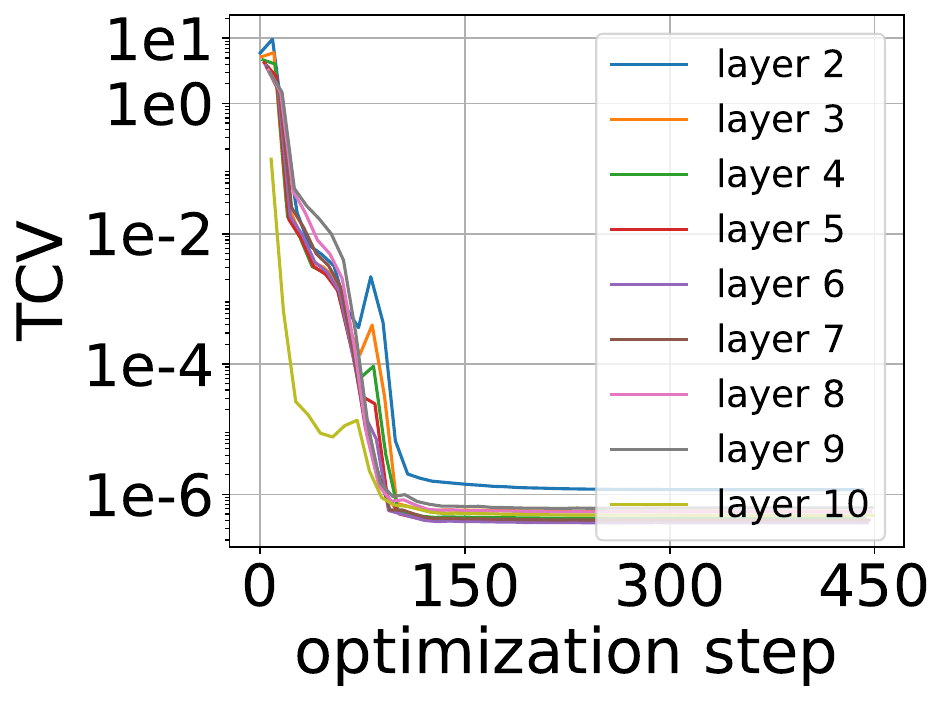}
\end{tabular}
\egroup
}
\caption{First three plots from the left: average of different quantities on all intermediate layers and for $100$ different random initializations. Shadows represent the area included between the minimal and maximal values observed. Each optimization step represents 100 epochs. Right: TCV of every layer during training.}
\label{fig:tcv_and_sranks_averaged}
\end{figure}

\section{Conclusions}
We proposed a theoretical investigation on the emergence of low-rank weight matrices and deep-neural collapse for a general class of neural networks. In particular, we established a quantitative relationship between deep neural collapse and the rank collapse of the weight matrices. In addition, we proved that for a general class of models, by not regularizing the first layer, $\DNC$ is optimal for all intermediate representations under a set of natural constraints, and that essentially there is no loss barrier between the interpolating global minima and $\DNC$ configurations. Lastly, we showed that the $\DNC$ configurations are stable and are preserved even when the first weight decay is included. Future investigations will go in the direction of relaxing some constraints and trying to adapt this approach to attention-based networks and GNNs. Moreover, we expect the same tools to be of interest when studying the representation learning of DNNs in general, which is also planned for future investigation.

\FloatBarrier


\bibliography{references.bib}
\bibliographystyle{tmlr}



\clearpage
\appendix
\section{Useful Definitions and Results}\label{sec:useful_facts}

In this section, we will introduce some useful definitions and fact which we will need to prove the results in \Cref{sec:ff_global}.
We start recalling the definition and some basic facts about $\Gamma$-convergence of functionals (see \citep{dalmaso1993introduction}).

\begin{definition}($\Gamma$-convergence)
\label{def:Gamma_convergence}
    Let $S$ be a locally numerable topological space and let $F_n: S \to \mathbb R$ be a sequence of functionals and $F: S \to \mathbb R$ be another functional. $F_n$ is said to $\Gamma$-converge to $F$ if the following hold:
    \begin{itemize}
        \item [(i)] For any sequence $(x_n) \subset S$ with $x_n \to x$, we have
        \[
        F(x) \leq \liminf_{n \to \infty} F_n(x_n).
        \]
        \item [(ii)] (Recovery sequence condition). For any $x \in S$, there exists a sequence $x_n \to x$ such that
        \[
        F(x) \geq \limsup_{n \to \infty} F_n(x_n).
        \]
    \end{itemize}
\end{definition}
To build some intuition, the first condition means that asymptotically, $F$ is a common lower bound for the sequence of $F_n$, at least pointwise. The second condition is a condition of recovery of the lower bound, implying, in a certain sense, optimality of the lower bound.
This definition can be useful when it is simpler to characterize the minima of $F_n$ than those of $F$, or viceversa. In our case, we can characterize the minima of $F$, but not those of $F_n$. However, thanks to $\Gamma$-convergence, we can establish that the minima of $F_n$ are going be close to those of $F$ in suitable scenarios. In particular, the following well known result holds:
\begin{proposition}\label{prop:convergence_of_minimizers}\citep{dalmaso1993introduction}
 Let $F_n \to F$ in the sense of $\Gamma$-convergence and let $S$ be sequentially compact. Then, for every sequence $x_n \in \arg\min_{y \in S} F_n(y)$ and every limit point $x$ of $(x_n)$, we have
 \[
 x \in \arg\min_{y} F(y).
 \]
\end{proposition}

The next lemma discusses the zeros of single-variable analytic functions, showing that their cardinality is always finite on compact sets, which we will need later. The results is well known and can be found in \citep{kranz_analytic_functions}.

\begin{lemma}[Finite number of zero of real analytic functions on compact intervals]\label{lemma:zero_analytic_functions}
Consider a nonzero real analytic function $f:[0,1] \to \R$. Then $f$ has a finite number of zeros in $[0,1]$.
\end{lemma}
\begin{proof}
First, we show that zeros are isolated. Let $t_0 \in [0,1]$ be a zero of $f$. Since $f$ is real analytic, then we can write
\[
f(t) = \sum_{k=0}^{+\infty} a_k (t-t_0)^k, \quad |t-t_0| \leq \rho
\]
since $f\ne 0$, there exists $m$ such that $a_m \ne 0$. Therefore,
\[
f(t) = (t-t_0)^m h(t), \quad h(t):=\sum_{k=0}^{+\infty} a_{m+k}(t-t_0)^k, \quad h(t_0) = a_m \ne 0 
\]
Therefore, by continuity of $h$ there exists a neighborhood of $t_0$ of radius $\rho'$ such that $h(t) \ne 0$ for all $|t-t_0| < \rho'$.
Then in this interval we have $f(t) = 0 \iff t = t_0$, proving that $t_0$ is isolated. Now, assume $f$ has infinitely many zeros $(t_k)$ in $[0,1]$, then by compactness there exists a convergent subsequence $t_{k_j} \to \bar t \in [0,1]$. But then, $\bar t$ would be an accumulation point, and this is only possible if $f$ is zero in an interval around $\bar t$. Since $f$ is analytic, this would force $f \equiv 0$, which is a contradiction.
\end{proof}

\begin{proposition}[Lower semicontinuity of functions defined through infimum]\label{prop:semicont}
    Let $X,Y$ be two finite-dimensional normed spaces, $f: X \times Y \to \mathbb R$ and $g: X \times Y \to \mathbb R^m$ be continuous functions, and $\Omega(x) = \{ y \in Y \,| g(x,y) = 0\, \}$. Moreover, assume that $f(x,y)$ is coercive with respect to $y$, uniformly in $x$, i.e. that there exists $C>0$ such that $\|f(x,y)\|\geq C\|y\|$ for all $x\in X$. Then, the function
    \[
    F(x) = \inf_{y \in \Omega(x)} f(x,y)
    \]
    is lower semicontinuous.
\end{proposition}
\begin{proof}
    To prove the lower semicontinuity of $F(x)$ we need to show that, for every sequence $x_k \to \bar x$, we have
    \[
    \liminf_k F(x_k) \geq F(\bar{x}).
    \]
    Let $(x_{k_j})$ be a subsequence of $(x_k)$ such that $\exists\lim_j F(x_{k_j})=\liminf_k F(x_k)$
    Then by definition of infimum, for every $x_{k}$ we can find $y_k \in \Omega(x_k)$ such that $F(x_k)\leq f(x_k,y_k) \leq F(x_k) + \frac{1}{k}$. Then we have two cases:
    \begin{itemize}
        \item [(i)] If the sequence $(y_{k_j})$ is unbounded, up to considering a subsequence, we can assume that $\| y_{k_j} \| \to \infty$. Then, for the  uniform coercivity of $f$ we have
        \[
        f(x_{k_j},y_{k_j}) \to \infty
        \]
        and, therefore,
        \[
        \liminf_k F(x_{k})=\lim_j F(x_{k_j}) \geq \lim_j \left(f(x_{k_j},y_{k_j})-\frac{1}{{k_j}}\right) = + \infty \geq F(\bar x).
        \]
        
        \item [(ii)] If the sequence $(y_{k_j})$ 
        is bounded, up to considering a subsequence, we can assume $y_{k_j} \to \bar y$. Since $y_{k_j} \in \Omega(x_{k_j})$ for all $j$, by the continuity of $g$, $0 \equiv g(x_{k_j},y_{k_j}) \to g(\bar x, \bar y)$. Therefore, $ \bar y \in \Omega(\bar x)$. Similarly, the continuity of $f$ gives
        \[
        f(x_{k_j},y_{k_j}) \to f(\bar x,\bar y).
        \]
        Therefore, we see that
        \[
        \liminf_k F(x_k) = \lim_j F(x_{k_j})\geq \lim_j \left(f(x_{k_j},y_{k_j})-\frac{1}{k_j}\right) = f(\bar x,\bar y) \geq \inf_{y \in \Omega(\bar x)} f(\bar x,y) = F(\bar x),
        \]
        which is the required inequality.
    \end{itemize}
    This concludes the proof.
\end{proof}

\begin{remark}\label{remark:lsc_of_f}
    We will use this last result in \Cref{subsec:ff_reachability} to prove the lower semicontinuity of $F(A,B) = \|AB^+\|^2$. In fact, by defining 
    \[x = (A,B) \in X = \{ (A,B) \in \mathbb R^{n \times N} \times \mathbb R^{m \times N} \, | \, AB^+B = A \} \quad \text{ and }\quad  y = W \in \mathbb R^{n \times m}\] 
    we can define $g(x,y) = WB - A$,  $f(x,y) = \| W \|^2$. Then, recalling that $\Omega(x)=\{y=W\;|\; g(x,y)=WB-A=0\}$, a standard result, see e.g. \citep[Section~5.5.1]{golub2013matrix}, yields 
\[
F(x) = \inf_{y \in \Omega(x)} f(x,y).
\]

In particular we are exactly in the setting of \Cref{prop:semicont}, proving lower semicontinuity of $F(A,B) = \|A B^+\|^2$. 
\end{remark}

\begin{lemma}[Blowup of $\LimitLoss$ outside the natural constraint]\label{lemma:blowup}
 Consider a fixed matrix $A \in \mathbb R^{n \times N}$ with $N \geq n \geq \rank(A) = r$ and the matrix functions $f, g: \mathbb R^{m \times N} \to \mathbb R$ defined as $g(B) = \|AB^+B-A\|^2$ and $f(B) = \|AB^+ \|$ with $m \geq n$.
    Let $B_k \rightarrow \bar{B}$ with $g(B_k) = 0$ for every $k$ and $g(\bar B) \ne 0$. Then, $\lim_{k}f(B_k)=+\infty$. 
\end{lemma}
\begin{proof}
    To prove it, let $U_k\Sigma_k V_k^T$ be the reduced SVD decomposition of $B_k$
and $\bar{U}\bar{\Sigma} \bar{V}^T$ the one of $\bar{B}$. Then by assumption we have $A=A \bar{V}\bar{V}^T+ A(I-\bar{V}\bar{V}^T)$ with the second term on the right hand side being different from zero because $g(\bar B) \ne 0$. Additionally, w.l.o.g., for any $k$ we can write $V_k=[V_{k,1}, V_{k,2}]$, $\Sigma_k=\mathrm{diag}[\Sigma_{k,1}, \Sigma_{k,2}]$ and $U_k=[U_{k,1}, U_{k,2}]$ where $[U_{k,1}, \Sigma_{k,1}, V_{k,1}]\rightarrow [\bar{U}, \bar{\Sigma}, \bar{V}]$,  $I-V_{k,1}{V_{k,1}}^T\rightarrow(I-\bar{V}\bar{V}^T)$ and $\Sigma_{k,2}\rightarrow 0$. Then 
\begin{equation*}
\begin{aligned}
    f(B_k)&=\|A V_k^1\Sigma_{k,1}^{-1} U_{k,1}^T+ A V_{k,2}\Sigma_{k,2}^{-1} U_{k,2}^T\|\geq \|A V_{k,2}\Sigma_{k,2}^{-1} \|-\|A V_{k,1}\Sigma_{k,1}^{-1} \|  \\
    &\geq \min_i{(\Sigma_{k,2})_{ii}^{-1}}\|A V_{k,2}\|-\|A V_{k,1}\Sigma_{k,1}^{-1} \|\rightarrow +\infty-\|A\bar{V}\bar{\Sigma}^{-1}\|=+\infty,
\end{aligned}
\end{equation*}
where we have used that the singular values are all nonnegative, i.e., $\Sigma_{k_{ii}}\geq 0$ for all $i$ and $k$ with ${\Sigma_{k,2}}_{ii}\rightarrow 0$ for any i, that $\lim_{k} AV_{k,2}=A(I-\bar{V})\neq 0$ and the equalities $\|A V_{k,1}\Sigma_{k,1}^{-1} U_{k,1}^T\|^2= Tr(A V_{k,1} \Sigma_{k,1}^{-1} U_{k,1}^T U_{k,1}\Sigma_{k,1}^{-1}V_{k,1}^T A)=Tr(A V_{k,1}\Sigma_{k,1}^{-1} \Sigma_{k,1}^{-1}V_{k,1}^T A)=\|A V_{k,1}\Sigma_{k,1}^{-1}\|^2$ and analogously $\|A V_{k,2}\Sigma_{k,2}^{-1} U_{k,2}^T\|=\|A V_{k,2}\Sigma_{k,2}^{-1}\|$.
\end{proof}

\subsection{Proof of Proposition \ref{Prop_rank_of_lossgradient_at_single_data}}
\label{Proof_Prop_rank_of_lossgradient_at_single_data}
    The main step of the proof is to show that the gradient with respect to a single data point produces matrices of rank at most 1. One then concludes by observing that the sum of $K$ terms of rank at most $1$ has rank at most $K$.
    Given the structure of the loss function that depends only on the output of the model $f(\param,x)$,
    if we use the chain rule and the recursive definition of $f(\param,x)$ in \eqref{DEF_Neural_Network}, we obtain
    \begin{align*}
    \nabla_{W_{l}} \loss\big(f(\Theta,x),y\big) &= \nabla_{W_{l}} \loss\big(f_{l}(\param_{l},z^{l}(x)),y\big) 
    = \frac{\partial f(\Theta,x)}{\partial W_l}^\top \nabla_f \loss\big(f(\Theta,x),y\big) 
    \\& = \frac{\partial z^l(x)}{\partial W_l}^\top \frac{\partial f_{l}(\param_{l},z^{l}(x))}{\partial z^l}^\top \nabla_f \loss\big(f(\Theta,x),y\big).  
    \end{align*}
    
    Now we use the assumption that $z^l = \sigma(W_l z^{l-1}+b_l) = \sigma(a^l)$,  where we have introduced the auxiliary variable $a^l := W_lz^{l-1}+b_l$. This finally leads to 
\begin{align*}
    \nabla_{W_{l}} \loss\big(f(\param,x),y\big) 
    & = \big(z^{l-1}(x) \otimes I\big) \frac{\partial z^L(x)}{\partial a^l}^\top \nabla_{z^L} \loss\big(z^L(x),y\big) \\
    & = 
    \frac{\partial f_{l}(\param_{l},\sigma(a^{l}(x)))}{\partial a^l}^\top \Bigl(\nabla_f \loss\big(f(\Theta,x),y\big) \cdot z^{l-1}(x)^\top  \Bigr).
\end{align*}
    By noticing that the last term $\nabla_{f} \loss\big(f(\Theta,x),y\big) z^l(x)^\top$ is a matrix of rank at most 1, we can conclude by submultiplicativity of the rank. \hfill $\square$

\subsection{Proof of Lemma \ref{Lemma_approx__fullgradient_by_centroid_based_loss_gradient}}\label{Lemma3.3Proof}
For simplicity, we will omit the superscript $j$ from $z^j_i$.
We start by considering the Taylor expansion of the gradient $\nabla_{\layer_l}\loss\big(f_j(\param_{j},z),y\big)$ centered in  $\big(\bar{z}_k,\bar{y}_k\big)$ for any point $\big(z_i,y_i\big)$ with $i\in \mathcal{C}_k$ and any $k=1,\dots,K$
\begin{equation*}
\begin{aligned}
\grad_{\layer_l} \loss\big(f_j(\param_{j}, z_i), y_i \big) 
=& \grad_{\layer_l} \loss\big(f_j(\param_{j}, \bar{z}_k), \bar{y}_k\big)\\
&+ \grad_{(z,y)}\nabla_{\layer_l} \loss\big(f_j(\param_{j}, \bar{z}_k), \bar{y}_k\big)\big[(z_i, y_i)-(\bar{z}_k, \bar{y}_k)\big] + R_i^k(\param),
\end{aligned}
\end{equation*}
where the reminder is given by
\begin{equation*}
R_i^k(\param) := \frac12\grad_{(z,y)}^2\nabla_{\layer_l} \loss\big(f_j(\param_{j}, \alpha_i^k) \beta_i^k\big)[(z_i, y_i)-(\bar{z}_k, \bar{y}_k), (z_i, y_i)-(\bar{z}_k, \bar{y}_k)],    
\end{equation*}
for some $(\alpha_i^k, \beta_i^k)$ between $(z_i, y_i)$ and $(\bar{z}_k, \bar{y}_k)$.
Therefore, using the linearity of $\grad_{(z,y)}\grad_{\layer_l} \loss[\cdot]$, we have
\begin{align*}
\grad_{\layer_l}\Loss_0^j(\param) 
 = &\frac{1}{N} \sum_{j=1}^N \nabla_{\layer_l}\loss\big(f_j(\param_{j}, z_j), y_j\big)=\\
 = &\frac{1}{N} \sum_{k=1}^K \sum_{i\in \mathcal{C}_k} \grad_{\layer_l}\loss\big(f_j(\param_{j}, z_i), y_i\big)=\\
 = & \frac{1}{N} \sum_{k=1}^K \sum_{i\in \mathcal{C}_k} \grad_{\layer_l} \loss\big(f_j(\param_{j}, \bar{z}_k), \bar{y}_k\big)+ \\&
 + \grad_{(z,y)}\grad_{\layer_l} \loss\big(f_j(\param_{j}, \bar{z}_k), \bar{y}_k\big)[(z_i, y_i) - (\bar{z}_k, \bar{y}_k)] + R_i^k(\param)=\\
 = & \sum_{k=1}^K \pi_k \grad_{\layer_l} \loss\big(f_j(\param_{j}, \bar{z}_k), \bar{y}_k\big) +\\
 &+ \frac{1}{N}\sum_{k=1}^K\grad_{(z,y)}\grad_{\layer_l} \loss\big(f_j(\param_{j}, \bar{z}_k), \bar{y}_k\big)\bigg[\underbrace{\sum_{i\in \mathcal{C}_k}((z_i, y_i) - (\bar{z}_k, \bar{y}_k))}_{=0}\bigg] +\\ 
& + \frac{1}{N} \sum_{k=1}^K \sum_{i\in \mathcal{C}_k} R_i^k(\param) =: \grad_{\layer_l} \Loss_0^{j,\mathcal{C}}(\param) + R(\param),
\end{align*}
where $\pi_k=N_k/N$.
%
Hence, we obtain  
\begin{align*}
 \|R(\param)\|  \leq & \frac{1}{N} \sum_{k=1}^K \sum_{i\in \mathcal{C}_k} \|R_i^k(\param)\|  \leq \frac{M_{\loss,j}(\param)}{2N} \sum_{k=1}^K \sum_{i\in \mathcal{C}_k}\|(z_i^k, y_i^k) - (\bar{z}_k, \bar{y}_k)\|^2= \frac{M_{\loss,j}(\param)}{2} \WCSS_j,
\end{align*}
where 
\begin{align*}
    M_{\loss,j}(\param)
    &=\sup_{k=1,\dots,K}\sup_{i\in \mathcal{C}_k}\|\grad^2_{(z,y)}\grad_{\layer_l}\loss\big(f_j(\param_{j},\alpha_i^k),\beta_i^k\big)\| \leq
     \\ &\leq \sup_{k=1,\dots,K}\sup_{(\alpha,\beta)\in \mathcal{H}_k^j}\|\grad^2_{(z,y)}\grad_{\layer_l}\loss\big(f_j(\param_{j},\alpha),\beta\big)\|,
\end{align*}
as desired. \hfill $\square$

\subsection{Proof of \Cref{Theorem_best_rank_k-approx_bound}}
\label{Theorem3.4Proof}
    Observe that from \eqref{Eq_critical_point_eq_regularized_loss}, at the stationary point we have 
    \begin{equation}\label{Thmeq_critical_point_eq}
        \layer^*_l=-\frac{1}{ \lambda}\grad_{\layer_l}\Loss_0(\param^*).
    \end{equation}
    Now, Lemma \ref{Lemma_approx__fullgradient_by_centroid_based_loss_gradient} allows us to control the distance of the gradient in the right-hand side of \eqref{Thmeq_critical_point_eq} from $\matrices_K^{d_{l-1},d_l}$. In particular, given $\mathcal{C}\in \cup_{r=1}^K\mathcal{P}_r$, for any $j=1,\dots,l-1$ we have
    \begin{equation*}
    \begin{aligned}
    \min_{Z\in \cup_{r=1}^K\matrices_r} \|\layer^*_l - Z\|
    &\leq \frac{\|\grad_{W_l} \Loss_0(\param^*)- \grad_{W_l} \Loss_0^{j,\mathcal{C}}(\param^*)\|}{\lambda}\\
    &\leq\frac{M_{l,j}(\param^*,\mathcal{C})\WCSS_{j}(\mathcal{C})}{\lambda},
    \end{aligned}
    \end{equation*}
    where we have used that $\grad_{W_l} \Loss_0^{j,\mathcal{C}}(\param^*)/\lambda$ has rank smaller than or equal to the number of families of $\mathcal{C}$.
    Then, first minimizing over $j=1,\dots,l-1$, second bounding from above $M_{l,j}(\param^*,\mathcal{C})\leq M_{l,j}(\param^*)$ for any $j$, and third minimizing over all the possible partitions $\mathcal{C}\in  \cup_{r=1}^K\mathcal{P}_r$ yield the thesis.

\section{Global Minima of Representation Cost for Feedforward Networks}\label{sec:ff_global}

In this section, we will prove the results presented in \Cref{sec:main_DNC} together with the necessary lemmas. As we already discussed in \Cref{sec:main_DNC}, the main idea is to study the minima of $\LimitLoss$ recursively layer by layer, i.e., to study the minima of the local terms $\LimitLoss_{loc}(Z^{l+1},Z^l) = \frac12\|\sigma_{l+1}^{-1}(Z^{l+1})Z^{l,+} \|^2$ in $Z^l$ for $Z^{l+1}$ fixed. We characterize such minimum, under the constraints considered in \Cref{thm:main_DNC_optimal_ff}, in the next technical lemma. This will serve as a basis for recursively proving the main \Cref{thm:main_DNC_optimal_ff}.
We start formally introducing  the constraints. Consider $A \in \mathbb{R}^{n \times N}$ a fixed matrix playing the role of $\sigma_{l+1}^{-1}(Z^{l+1})$, with $\rank(A) = r$, and $\{s_i(A)\}$ its singular values. Then, for some fixed $m\geq n$ and $C>0$ we write 
\begin{equation}\label{eq:local_constraint}
    \mathcal S_{loc}(A):=\Bigl\{B \in\mathbb{R}^{m \times N} : s_i(B)^2 \leq Cs_i(A)^2\,\, \forall i = 1,\dots,r,\,\, AB^+B = A \Bigr\}.
\end{equation}

\begin{lemma}[Optimal single layer for feedforward representation cost]\label{lemma:ff_lemma_svals}
Let $A \in \mathbb{R}^{n \times N}$ be a fixed matrix with $\rank(A) = r$, and $\{s_i(A)\}$ its singular values and let also $\localF(B) = \frac{1}{2}\| AB^+\|^2$.
    Then, for $n \leq m$ and any fixed constant $C>0$  in the definition of $\mathcal{S}_{loc}(A)$, we have that the solution to the  constrained optimization problem $\min_{B \in \mathcal S_{loc}(A)}\localF(B)$  is attained at all points
    \[
B^* = OA, \quad O \in \mathbb R^{m \times n}, \quad O^\top O = I.
    \]
   In particular, for any optimal $B^*$, we have 
    $\LimitLoss(B^*)=\rank(A)/2C$. Moreover, for $n=m = N$, and for Lebesgue-almost every $B_0\in \mathcal S$, there exists a continuous path $\gamma:[0,1] \to \R^{N \times N}$ with $\gamma(0) = B_0$, $\gamma(1) \in \underset{B \in \mathcal S_{loc}(A)}{\arg\min} \, \localF(B)$ and $\frac{d}{dt}\localF(\gamma(t)) \leq 0$.

\end{lemma}
\begin{proof}
First of all, we prove the existence of a minimizer. Observe that whenever a sequence $B_k \to \bar B$ with $A\bar B^+ \bar B \ne A$, then $\localF(B_k) \to +\infty$ thanks to \Cref{lemma:blowup}. Therefore, we can restrict our attention to sequences $B_k \to \bar B$ with $A \bar B^+ \bar B = A$ (and therefore in particular with $\rank(\bar B) \geq \rank(A)$).

Since the constraint set $\mathcal S_{loc}(A)$ is bounded, any sequence $B_k$ for which $\localF(B_k) \to \inf_{B \in \mathcal S} \localF(B)$ is also bounded.
Therefore, there exists a convergent subsequence $B_{k_j} \to \bar B$ with $\rank(\bar B) \geq \rank(A)$. By lower semicontinuity of $\localF$ (shown in \Cref{remark:lsc_of_f}) we obtain
\[
\lim_j \localF(B_{k_j}) = \liminf_j \localF(B_{k_j}) = \inf_{B \in \mathcal S} \localF(B)  \geq \localF(\bar B).
\]
Since the singular values $s_i(B)$ are continuous functions of $B$, the inequality $s_i(B)^2 \leq C s_i(A)^2$ gives a closed set, and therefore $s_i(\bar B)^2 \leq C s_i( A)^2$ .
Moreover, we restricted without loss of generality to sequences $(B_k)$ for which $A\bar B^+\bar B = A$ and therefore we have $\bar B \in S_{loc}$. Combining $\bar B \in S_{loc}$ with $\inf_{B \in \mathcal S} \localF(B)  \geq \localF(\bar B)$ we get that $\bar B$ is a minimizer of $\localF$ in $\mathcal{S}_{loc}(A)$.

    To understand the structure of minimizers, decompose $A,B$ using a reduced SVD to get $A = U_A \Sigma_A V_A^\top, B = U_B \Sigma_BV_B^\top$. Notice that the constraint $AB^+ B = A$ implies that $\mathrm{rank}(A) \leq \mathrm{rank}(B)$.  Then the constraint is satisfied if and only if
    \[
    U_A \Sigma_A V_A^\top V_B \Sigma_B^+ U_B^\top U_B \Sigma_B V_B^\top = U_A \Sigma_A V_A^\top V_B V_B^\top = U_A \Sigma_A V_A^\top.
    \]
    Multiplying on the left by $\Sigma_A^+ U_A^\top$
    we obtain
    \[
    V_A^\top V_B V_B^\top = V_A^\top
    \]
    and by multiplying on the right by $V_A$ we get
    \[
    \underbrace{(V_A^\top V_B)}_{=:Q} (V_B^\top V_A) = Q Q^\top = I.
    \]
    Concerning the function value, by substituting $V_A^\top V_B = Q$ we see that
    \begin{equation}\label{eq:local_weightdecay}
    \begin{split}
    \localF( U_B \Sigma_B V_B) &= \frac{1}{2} \| U_A \Sigma_A V_A^\top  V_B\Sigma_B^+ U_B^\top\|^2 = \frac{1}{2} \| U_A \Sigma_A Q \Sigma_B^+ U_B^\top\|^2 =\\
    & =\frac{1}{2} \mathrm{tr}(U_A \Sigma_A Q \Sigma_B^+ U_B^\top U_B (\Sigma_B^+)^\top Q^\top \Sigma_A^\top U_A^\top) =\\
    &=\frac{1}{2}\mathrm{tr}(\Sigma_A Q \Sigma_B^+ (\Sigma_B^+)^\top Q^\top \Sigma_A^\top ) = \frac{1}{2 } \|\Sigma_A Q \Sigma_B^+ \|^2 =\\
    & = \frac{1}{2 } \sum_{i = 1}^r \sum_{j=1}^m s_i(A)^2 s_j(B)^{-2} Q_{ij}^2 = \frac{1}{2 } \sum_{j = 1}^m s_j(B)^{-2} \sum_{i=1}^r s_i(A)^2 Q_{ij}^2.
    \end{split}
    \end{equation}
    Assume without loss of generality that the singular values of both $A,B$ are ordered in a descending manner.
    Now, since $QQ^\top = I$, this is a classical form of Brockett's cost function \citep{BROCKETT199179}, whose solution is given by $Q^* = [I_r,0_{r,m-r}]$ by using Theorem 2.1 in \citep{Liang_2023}, which is a generalization of the classical Ky-Fan trace inequality.
    Then, we have
    \[
    \min_{s_j(B)>0} \frac12\sum_{j = 1}^r \frac{s_j(A)^2}{s_j(B)^2}, \quad \text{s.t.} \, s_j(B)^2 \leq Cs_j(A)^2, \forall j = 1,\dots,r.
    \]
    which is minimized by $s_k(B)^{2} =  Cs_k(A)^2$.
    The optimal function value is given by 
    \[
    \localF(U_B \Sigma_A^* V_A^\top) = \frac{1}{2} \sum_j \frac{s_j(A)^2}{Cs_j(A)^2} = \frac{1}{2C}\rank(A). 
    \]
    Therefore, we have that minima are attained with $V_B (Q^*)^\top = V_A$ for $Q^* = [I_r,0_{r,m-r}] \in \mathbb{R}^{r \times m}$, $\Sigma_B \propto \mathrm{blockdiag}(\Sigma_A,0_{m-r})$. This implies $V_B (Q^*)^\top Q^* = V_AQ^*$. Therefore, the $r$ dominant right eigenvectors of $B$ coincide with those of $A$. Since the last $m-r$ singular values of $B$ are zero, we can use the reduced SVD form and delete the last $m-r$ columns of $V_B$, therefore obtaining $V_B = V_A$ and $\Sigma_B \propto \Sigma_A$. Moreover, $U_B$ is free, therefore all the optimal points are
    \[
    B^* \propto OU_A \Sigma_AV_A^\top = OA, \quad  O \in \mathbb R^{m \times n}, O^\top O = I  \, (\text{since} \, n \leq m)
    \]
    and the optimal value is given by
    \[
    \localF(B^*) = \frac{\rank(A)^2}{2C}.
    \]

    Now, to prove the existence of the path $\gamma(t)$, we assume $n = m = N$ and, without loss of generality, we consider $C = 1$.
    From \Cref{eq:local_weightdecay} it is sufficient to find a path for the singular values $(s_i(t))$, and for the matrix $Q(t) \in O_N(\R)$ such that they make $\localF$ decrease and reach the optimal value.

    The existence of a descent path in $Q$, for fixed $\Sigma_A,\Sigma_B^+$, is guaranteed by \Cref{lemma:descent_Q}. Then, we can assume w.l.o.g. that $Q = I$ by assuming that the singular values of $A$ and $B$ are both ascendingly ordered.  
    So we concatenate the path in $Q$ with the following one for the singular values of $B$:
    \begin{equation}\label{eq:path_s}
    s_i(B)(t) = (1-t)s_i(B_0) + t s_i(A).
    \end{equation}
    The path in $Q$ decreases $\LimitLoss$ by \Cref{lemma:descent_Q}.
    Additionally, by taking the derivative in $t$ of $\localF$ along the path \eqref{eq:path_s}, from \eqref{eq:local_weightdecay} we have
    \begin{align*}
    \frac{d}{dt} \localF(OU_A \Sigma_B(t) Q(1)^\top V_A^\top) &= \frac{d}{dt}\sum_{i,j=1}^Ns_i^2(A) s_j^{-2}(B)(t) Q(1)^2_{ij} \\
    & = -2 \sum_{i,j=1}^Ns_i(A)^2 s_j(B)(t)^{-3}\delta_{ij}(s_j(A) - s_j(B)(0)) \\
    & =   -2 \sum_{i=1}^N s_i^2(A) s_i(B)(t)^{-3}(s_i(A) - s_i(B)(0))\leq 0,
    \end{align*}
    where in the second equality we have used that $Q(1) = I$ and the last inequality holds because $s_i(B_0) \leq s_i(A)$. 
     This concludes the proof.
\end{proof}
\begin{remark}[Local lemma for backbone]
    In the case of a deep-architecture, the second layer requires a dedicated analysis because of the double representation constraint with the first layer
    \[
    \min_{B \in \mathcal S_{loc}(A),\,\sigma_2^{-1}(B)Z^{1,+}Z^1 = \sigma_{2}^{-1}(B)}\LimitLoss_{loc}(B).
    \]
    In particular, if the columns of $Z^1$ are linearly independent, then the second representation constraint is trivially satisfied. In fact, in that case $Z^{1,+}Z^1 = Z^{1,-1}Z^1 = I_N$, reducing to the case presented in \Cref{lemma:ff_lemma_svals}. Thanks to this observation and \Cref{lemma:large_layer_lindip}, the following analysis holds in the pyramidal architecture hypothesis and with a fixed $W_1$ which satisfies the assumptions of \Cref{lemma:large_layer_lindip}.
\end{remark}

\begin{lemma}[Generic linear independence after a wide layer]\label{lemma:large_layer_lindip}
    Consider an analytic non-polynomial activation function $\sigma: \R \to \R$ and a set of data points $\{x_i \}_{i=1}^N\subset \R^{n}$ with $n \leq N \leq 2^{n-1}$. Then for, Lebesgue-almost every couple $(X,W) \in \R^{n \times N} \times \R^{N \times n}$ it holds that $\det(\sigma(WX)) \ne 0$, where $X = [x_1,\dots,x_N] \in \mathbb{R}^{n \times N}$ and $\sigma$ applies entrywise.
\end{lemma}
\begin{proof}
 Letting $Z(W,X) := WX$, consider the determinant    \[
    \det(\sigma(WX)) =: 
    F(Z(W,X)),
    \]
    which is analytic in $(W,X)$. Therefore, if we can prove that $F$ is not identically zero, then its zero set in $\R^{N \times n} \times \R^{n \times N}$ has Lebesgue measure zero \citep[Proposition 1]{Mityagin}, and we can conclude. To show this, we need to exhibit a single matrix $Z$ with $\mathrm{rank}(Z) \leq n$ for which $F(Z)\ne 0$. Following \citep[Theorem 2.5]{chu2024metric}, whenever $\sigma$ is non-polynomial, then $\sigma$ is not rank preserving ( see \citep[Definition 2.2]{chu2024metric}), meaning that there exists a matrix $Z$ with $\rank(Z) \leq \log_2(N)+1 \leq n$ such that $\rank(\sigma(Z)) = N$. This concludes the proof.
\end{proof}

\begin{lemma}[Descent curve on $O_N(\R)$]\label{lemma:descent_Q}
Let $D_1,D_2 \in \R^{N \times N}$ be fixed diagonal positive definite matrices, each of them with distinct eigenvalues, and define
\[
\phi: O_N(\R) \to \R, \quad \phi(Q):= \|D_1 Q D_2 \|^2 = \mathrm{tr}(D_1^2 Q D_2^2Q^\top).
\]
Then, for almost every $Q_0 \in O_N(\R)$ there exists an analytic curve $Q: [0,1] \to O_N(\R)$ such that 
\begin{itemize}
    \item $Q(0) = Q_0$,
    \item $Q(1) \in \underset{Q \in O_N(\R)}{\arg\min} \,\, \phi(Q)$,
    \item $\frac{d}{dt} \phi(Q(t)) \leq 0$.
\end{itemize}
\end{lemma}
\begin{proof}
First, we observe that the structure of all stationary and in particular, of global minima and saddle points is studied in  \citep[Theorem 4]{BROCKETT1989761}. From this last result all the minima are global minima, and all saddle points are strict under the hypothesis that all the eigenvalues of both $D_1$ and $D_2$ are distinct.
Let us now consider the Riemannian gradient flow
\[
\dot Q = - \mathrm{grad}\, \phi(Q) = -QD_2^2Q^\top D_1^2 Q+ D_1^2 Q D_2^2,
\]
where $\mathrm{grad}$ is the gradient induced by the Frobenius inner product on $O_N(\R)$.
We now use \citep[Theorem 6.3]{bah2022learning}, which guarantees almost sure (with respect to the initial condition) convergence of the gradient flow to the local minima. Since we know that all local minima are global, we get the desired thesis.

Moreover, the solution curve $Q(t)$ is analytic because the vector field $\mathrm{grad}\, \phi(Q)$ is analytic.
\end{proof}

Using \Cref{lemma:ff_lemma_svals}, it is possible to prove the main result:
\subsection{Proof of \Cref{thm:main_DNC_optimal_ff}}
\begin{proof}
Without loss of generality, we assume $C=1$ in \Cref{eq:constraint}. Moreover, we can exclude the case $Z^L \ne Y$, since in that case $\LimitLoss = + \infty$; therefore we assume $Z^L = Y$.
To understand the existence and the  structure of the minimizers, we can proceed iteratively.
To this end, 
 denote by $\mathcal S_l = \mathcal S_l(Z^{l-1},Z^{l+1})=\{Z^l \text{ that satisfy (I), (II) and (III)}\}$   for fixed, $Z^{l-1}$ and $Z^{l+1}$. 
 Note in particular that 
 \begin{equation}\label{eq1:thm_optimal_DNC1}
 \mathcal{S}_l(Z^{l-1},Z^{l+1})\subseteq \mathcal{S}_{loc}(\sigma_{l+1}^{-1}(Z^{l+1})),   
 \end{equation}
 where we refer to \eqref{eq:local_constraint} for the definition of $\mathcal{S}_{loc}(\sigma_{l+1}^{-1}(Z^{l+1}))$.
As a consequence we study the infimum in $Z^1$ and observe the following set of inequalities
 \begin{equation}\label{align1:thm_optimal_DNC1}
 \begin{aligned}
 \inf_{Z^1 \in \mathcal S_1} \LimitLoss(Z^L,\dots,Z^{1})&\geq \min _{Z^1 \in \mathcal{S}_{loc}(\sigma_{2}^{-1}(Z^{2}))} \LimitLoss(Z^L,\dots,Z^{1}) \\
 &= \frac{1}{2}\sum_{l=2}^{L-1}\|\sigma_{l+1}^{-1}(Z^{l+1})Z^{l,+} \|^2 + \frac{\rank(\sigma_2^{-1}(Z^2))}{2}  \\
 & \geq \frac{1}{2}\sum_{l=2}^{L-1}\|\sigma_{l+1}^{-1}(Z^{l+1})Z^{l,+} \|^2 + \frac{\rank(Y)}{2},
 \end{aligned}
 \end{equation}
 where we have used first \eqref{eq1:thm_optimal_DNC1}, second, \Cref{lemma:ff_lemma_svals}, which proves the existence of the minimum in $\mathcal{S}_{loc}(\sigma_{2}^{-1}(Z^{2}))$ and characterizes its value, and third, condition (III) in \Cref{eq:constraint}, which guarantees the last inequality when applied to $\rank(\sigma_{2}^{-1}(Z^{2}))$. Next, we focus on the lower bound in \eqref{align1:thm_optimal_DNC1} and define the following function, depending only on $Z^2,\dots Z^L$:
 \begin{equation*}
 \LimitLoss_{1}(Z^L,\dots,Z^2):=\frac{1}{2}\sum_{l=2}^{L-1}\|\sigma_{l+1}^{-1}(Z^{l+1})Z^{l,+} \|^2 + \frac{\rank(Y)}{2}.
 \end{equation*}
Thus, we proceed optimizing $\LimitLoss_1$ with respect to $Z^2$ as we did for $\LimitLoss$ with respect to $Z^1$ and obtain
 \begin{equation}\label{align3:thm_optimal_DNC1}
 \begin{aligned}
 \inf_{\substack{Z^1\in \mathcal{S}_1,\\ Z^2 \in \mathcal S_2}} \LimitLoss(Z^L,\dots,Z^2,Z^{1}) &\geq \inf_{Z^2 \in \mathcal S_2} \LimitLoss_1(Z^L,\dots,Z^2) \\
 &\geq \min_{Z^2\in \mathcal{S}_{loc}(\sigma_3^{-1}(Z^3))}\LimitLoss_1(Z^L,\dots,Z^2)\\
 &= \frac12 \sum_{l=3}^{L-1}\|\sigma_{l+1}^{-1}(Z^{l+1})Z^{l,+} \|^2 + \frac{\rank(\sigma_3^{-1}(Z^3))}{2} + \frac{\rank( Y)}{2} \\
 &\geq \frac12 \sum_{l=3}^{L-1}\|\sigma_{l+1}^{-1}(Z^{l+1})Z^{l,+} \|^2 + \frac{2\rank( Y)}{2}.
 \end{aligned}
 \end{equation}
 We continue iteratively by defining every time 
 \begin{equation*}
\LimitLoss_{i}(Z^L,\dots,Z^i):=     \frac12 \sum_{l=i+1}^{L-1}\|\sigma_{l+1}^{-1}(Z^{l+1})Z^{l,+} \|^2 + \frac{i\rank( Y)}{2}.
 \end{equation*}
 Finally, we obtain 
 \begin{equation*}
     \inf_{Z^i\in \mathcal{S}_i\; \forall i=1,\dots,L}\LimitLoss(Z^L,\dots,Z^2,Z^{1})\geq \frac{L}{2}\rank(Y).
 \end{equation*}

 Now we \textbf{claim} that there exists some $\bar Z\in \mathcal{S}$ such that $\LimitLoss(\bar Z)=(L/2)\rank(Y)$. Note that, as a consequence of the claim, we have that the minimum in $\mathcal{S}$ exists and that all the inequalities in \eqref{align1:thm_optimal_DNC1} and \eqref{align3:thm_optimal_DNC1} are equalities. In particular, because of \Cref{lemma:ff_lemma_svals}
 any minimum $\bar Z$ has to satisfy 
 \begin{equation}\label{eq2:thm_optimal_DNC1}
     \bar Z^l = O_l\sigma_{l+1}^{-1}(\bar Z^{l+1}) \qquad \forall l=1,\dots, L-1,
 \end{equation}
 for some sequence of matrices $O_l \in \R^{n_l \times n_{l+1}}$ such that $O_l^\top O_l = I$. We conclude by observing that $Y=Z^{L}$ satisfies DNC1 and so, applying \eqref{eq2:thm_optimal_DNC1} backwards from $L$ to $1$, also all the other layers $\bar{Z}^l$, at the minima, satisfy DNC1.

 We are left to prove the \textbf{claim}. To this end, recall that $y_j=e_{i_j}$ for any $j$ and $\sigma_l(0)=0$ for any $l$. So, $\sigma_{L}^{-1}(y_j)=\alpha_L e_{i_j}$ for any $j$ where $\alpha_L$ is a constant different from zero and independent of $j$. In particular we can consider $O_{L-1}^T=[\mathrm{Id},0]$ and $\bar{Z}^{L-1}=O_{L-1} \sigma_{L}^{-1}(Y)=\alpha[Y,0]^T$ and iteratively do the same for any $l$. The obtained $\bar{Z}$ is then included in $\mathcal{S}$ because each layer $Z^l$, as its counterimage $\sigma_l^{-1}(Z^l)$, are a rescaled version of $Y$. In particular, conditions \ref{cond1},\ref{cond2},\ref{cond3} are satisfied. Moreover, by its definition, $\|\sigma_{l+1}^{-1}(\bar{Z}^{l+1})\bar{Z}^{l,+}\|^2=\rank (\sigma_{l+1}^{-1}(\bar{Z}^{l+1}))=\rank(Y)$, yielding $\LimitLoss(\bar{Z})=(L/2)\rank(Y)$ and concluding the proof of the claim.
\end{proof}

\subsection{Reachability of DNC configuration for almost every initial condition}\label{subsec:ff_reachability}
A natural question arising from the last section is: from which initial configurations can the global minima be reached? In this section, we will prove not only that the $\DNC$ minima are global, but also that they are reachable from any interpolating initial condition in a related setting thanks to the result in \Cref{thm:main_reachability_nc}.

We are now ready for the main result about the reachability of rank-collapsed configurations through energy-decreasing paths.

\subsection{Proof of \Cref{thm:main_reachability_nc}}
\begin{proof}
    The idea is to construct a concatenation of curves layer by layer, where the existence of a curve for the local problem is given in \Cref{lemma:ff_lemma_svals}. Moreover, we notice immediately that according to \Cref{lemma:ff_lemma_svals}, this can be done for Lebesgue almost every $Z^l(0) \in \mathcal S$.
    
    We recall the notation $\LimitLoss(Z) = \sum_{l=1}^{L-1} \LimitLoss_l$, where $\LimitLoss_l := \frac12\|\sigma_{l+1}^{-1}(Z^{l+1})Z^{l,+} \|^2$. First observe that, from \Cref{lemma:ff_lemma_svals}, there exists a path $\gamma_1(t)$ moving $Z^1(0)$ to an optimal $Z^{1,*}(Z^2(0))=OZ^2(0)$, for some $O^\top O= I$.
    After having optimized the first layer, we can optimize the second one. To this end, 
    we consider a path $\gamma_2^2(t)$ moving $Z^2(0)$ to an optimum $Z^{2,*}(Z^3(0))$, which in turn induces a path $\gamma_2^1(t)\in Z^{1,*}(Z^2(t))$ on the first layer, which keeps it optimal.  
    For this second path $(\gamma_2^1(t), \gamma_2^2(t))$, thanks to \Cref{lemma:det_change_finitepoints} and the hypothesis that $\rank(\sigma_{2}^{-1}(Z^2(0))) = N$, we have that for all $t \in [0,1]$ but a finite set, $\rank( \sigma_2^{-1}(\gamma_2^2(t))) = N$, and therefore the function $\LimitLoss_2(\gamma_2^2(t)) + \rank(\sigma_2^{-1}(\gamma_2^2(t)))^2$ is continuous up to a finite number of removable discontinuities. In the discontinuities $t_i$, as shown in \Cref{lemma:det_change_finitepoints}, $\rank( \sigma_2^{-1}(\gamma_2^2(t_i))) < N$, the jump is down. In the other points, we have
    \[
    \frac{d}{dt}\LimitLoss(Z^L(0),\dots,\gamma_2^2(t),\gamma^1_2(t)) = \frac{d}{dt} \LimitLoss_2(\gamma_2^2(t)) + \frac{d}{dt}\rank(\sigma_2^{-1}(\gamma_2^2(t)))^2 = \frac{d}{dt} \LimitLoss_2(\gamma_2^2(t)) \leq 0,
    \]
    where the last is because the rank of $\sigma_2^{-1}(\gamma_2^2(t))$ is constant along the path $\gamma_2^2(t)$.
    
    Inductively, after having optimized the first $l-1$ layers, we can define paths 
$\gamma_l^{l'}(t)$ for all $l'\le l$ that move $Z^l(0)$ to an optimum while keeping all the previous layers optimal and satisfying the required decrease condition 
\[
\frac{d}{dt}\LimitLoss(Z^L(0),\dots,\gamma_l^l(t),\dots, \gamma_l^2(t),\gamma_l^1(t)) = \frac{d}{dt} \LimitLoss_l(\gamma_l^l(t)) \leq 0.
\]
By concatenating all of the paths that we have considered we obtain the required path $\gamma$.
Note that for the last path, the rank of all layers stays constant for almost all times, while collapsing to $K$ at $t = 1$, therefore again decreasing the objective. 
\end{proof}
\begin{lemma}[Finitely-many zeros of determinant on analytic matrix curves]\label{lemma:det_change_finitepoints}
Consider an entrywise analytic function $g$ and a piecewise analytic curve $A: [0,1] \to \mathbb R^{N \times N}$ with $\rank g(A(0)) = N$. Then there exists at most a finite number of times $\{t_1,\dots,t_k\}$ in which such $\rank g(A(t)) \leq N-1$. In particular, the function $t \mapsto \rank(g(A(t)))$ has just removable discontinuities, i.e., left and right limits exist and $\underset{t \to t_i^+}{\lim} \rank(g(A(t))) = \underset{t \to t_i^-}{\lim} \rank(g(A(t))) = N> \rank(g(A(t_i)))$.
\end{lemma}
\begin{proof}
We can prove it without loss of generality for a fully analytic curve $A(t)$, and then the same result holds on all subintervals in which $A$ is analytic (note that there are finitely many of them).

The function $t \mapsto \det(g(A(t)))$ is analytic and not constantly zero since $\det g(A(0))\ne 0$, and therefore, thanks to \Cref{lemma:zero_analytic_functions}, it has a finite number of zeros in $[0,1]$. Therefore, there can be just a finite number of points in which $\rank g(A(t)) \leq N-1$.
Now, we will prove that the discontinuities are removable. In particular, let $\varepsilon>0$. Since $\rank(g(A(t))) = N$ for all $t \in [0,1]\setminus \{t_1,\dots,t_k \}$, there exists $\delta>0$ such that no $t_j \in (t_i-\delta,t_i)$. In $(t_i-\delta,t_i)$, $\rank(g(A(t)) \equiv N$ and therefore we proved that the left limit exists and it is equal to $N$. The same can be done for the right limit.
Therefore the concatenation of the paths $\gamma$ is continuous and $\LimitLoss \circ \gamma$ has a finite number of discontinuities $E :=\{t_i\}_i$.
\end{proof}

As a corollary of \Cref{thm:main_reachability_nc} we have the following:
\subsection{Proof of \Cref{cor:finite_step_reachability}}\label{proof:finite_step_reachability}
\begin{proof}
    Consider the curve $\gamma$ from \Cref{thm:main_reachability_nc} together with its finite number of discontinuities $E :=\{t_i\}_i$ in which
 \[
 \underset{t \to t_i^-}{\lim} \LimitLoss(\gamma(t)) = \underset{t \to t_i^+}{\lim} \LimitLoss(\gamma(t)) >\LimitLoss(\gamma(t_i)).
 \]
 Consider now a sequence of sets $(\{T_{j,k} \}_{j=0,\dots,k-1})_k$, where $0<T_{j+1,k}-T_{j,k} \leq \frac{1}{k}$ for all $j$, $T_{0,k} = 0, T_{k-1,k} = 1$ for all $k$, and such that $T_{j,k} \notin E$ for all $j,k$.
 Define now the sequence of paths
 \[
 \gamma_k(s) = \gamma\Bigl( \sum_{j=0}^{k-1} T_{j,k} 1_{[T_{j,k},T_{j+1,k})}(s) \Bigr).
 \]
 Then, by construction, we have $\gamma_k : [0,1] \mathcal \to S$, $\gamma_k(0) = Z(0), \gamma_k(1)$ global minimizer. Moreover, since $\gamma_k(T_{j,k}) = \gamma(T_{j,k})$ and $T_{j,k} \notin E$ we have that the sequence $(\LimitLoss(\gamma_k(T_{j,k})))_j = (\LimitLoss(\gamma(T_{j,k})))_j$ is decreasing.
 Finally, we show uniform convergence to the $\gamma$ constructed above, by using the fact that $\gamma$ is piecewise analytic and therefore Lipschitz:
 \begin{align*}
 \sup_{s \in [0,1]} \| \gamma_k(s)-\gamma(s) \| & = \max_{j=0,\dots,k-1} \sup_{s \in [T_{j,k},T_{j+1,k}]} \|\gamma_k(s)-\gamma(s) \| \\ 
 &= \max_{j=0,\dots,k-1} \sup_{s \in [T_{j,k},T_{j+1,k}]} \|\gamma(T_{j,k})-\gamma(s) \|  \\
 & \leq L \max_{j=0,\dots,k-1}|T_{j+1,k}-T_{j,k}| \leq \frac{L}{k} \to 0. & (\gamma \text{ Lipschitz})
 \end{align*}
 Therefore $\gamma_k$ converges to $\gamma$ uniformly.
\end{proof}

\begin{remark}[Intuition on \Cref{cor:finite_step_reachability}]\label{remark:reachability_finite_step}
    We highlight that an interpretation of \Cref{cor:finite_step_reachability}, is in terms of numerical schemes. In particular, since $\gamma_k:[0,1] \to \mathcal S$ is piecewise constant, we can think of it as an ordered sequence of $k$ points $\Gamma_k:= (\gamma_k(0),\gamma_k(T_{1,k}),\dots,\gamma_k(T_{k-1,k}),\gamma_k(1))$, where $\gamma_k(0) = Z(0)$ and $\gamma_k(1) \in \underset{Z \in \mathcal S}{\arg\min}\,\LimitLoss$. We know from \Cref{cor:finite_step_reachability} that $0<T_{j+1,k}-T_{j,k}:= \Delta_{j,k}<\frac{1}{k}$, and we can think of $\Delta_{j,k}$ as the stepsize of an optimization scheme. With this heuristic intuition in mind, the result proven in the corollary essentially states that for Lebesgue-almost every initialization $Z(0) \in \mathcal S$, there exists a numerical scheme with \textbf{arbitrarily small} maximal stepsize such that $\LimitLoss$ decreases along it.
    Notice that, if there was any loss barrier between $Z(0)$ the the global minimizers of $\LimitLoss$ on $\mathcal S$, then this would not be possible: in particular, if there was a barrier between initialization and minimizers, then there would exist a small enough stepsize such that $\LimitLoss$ would need to increase along the sequence.
\end{remark}

\subsection{Stability of the $\DNC$ Configurations and Proof of \Cref{prop:main_gammaconv_firstlayer}}\label{subsec:stability_ff}
\begin{proof}
We can without loss of generality assume $Z^L = Y$, otherwise the condition is trivially satisfied. Therefore, we can assume $\mathcal S$ contains the interpolating condition $Z^L = Y$ as a constraint.
    Define and consider a sequence $Z_\lambda \to Z^*  \in \mathcal S$ for $\lambda \to \infty$, $Z_\lambda \in \mathcal S$. Since $\mathcal S$ contains the interpolating condition, we have $\LimitLoss_\lambda(Z^*) < +\infty$.  Define $M:= \lim \inf_{\lambda \to +\infty} \LimitLoss_\lambda(Z_\lambda) = \lim_{m \to +\infty} \LimitLoss_{\lambda_m}(Z_{\lambda_m})$ for a converging subsequence $\lambda_m$ (we can exclude the case $M = + \infty$ cause the required inequality is trivially satisfied). Then, thanks to the lower semicontinuity of $\LimitLoss_\lambda$ we get
    \[
   \LimitLoss(Z^*) \leq \mathcal \liminf_{m} \LimitLoss(Z_{\lambda_m}) = \liminf_{m \to \infty} \frac12\sum_{l=1}^{L-1} \|\sigma_{l+1}^{-1}(Z^{l+1}_{\lambda_m})Z^{l,+}_{\lambda_m} \|^2 \leq \liminf_m \LimitLoss_{\lambda_m}(Z_{\lambda_m}) < + \infty.
    \]
    As a recovery sequence for the $\limsup$ condition we take $Z_\lambda \equiv Z^*$, in fact we have
    \[
    \limsup_{\lambda \to \infty} \LimitLoss_{\lambda}(Z^*) = \lim_{\lambda \to \infty} \LimitLoss_{\lambda}(Z^*) = \LimitLoss(Z^*).
    \]
    Moreover, let $\lambda_n \to +\infty$ with $Z_n \in \arg\min_{Z\in \mathcal{S}} \LimitLoss_{\lambda_n}(Z)$ and that it admits an accumulation point $Z^* \in \mathcal S$. 
    Then there exists a convergent subsequence $Z_{n_j} \to Z^*$.
    For every $(\mathscr Z_j)$ recovery sequence for a generic $\mathscr Z$ we have, for the definition of $\Gamma$-convergence and the fact that $Z_{n_j}$ is a minimizer of $\LimitLoss_{\lambda_{n_j}}$,
    \[
    \LimitLoss(Z^*) \leq \lim\inf_{j} \LimitLoss_{\lambda_{n_j}}(Z_{n_j}) \leq \lim\inf_{j} \LimitLoss_{\lambda_{n_j}}(\mathscr Z_{j}) \leq \lim\sup_{j} \LimitLoss_{\lambda_{n_j}}(\mathscr Z_{j}) \leq \LimitLoss(\mathscr Z),
    \]
    which implies $\LimitLoss(Z^*) \leq \LimitLoss(\mathscr Z)$ for every $\mathscr Z \in \mathcal S$, therefore $Z^*$ is a minimizer.
\end{proof}

\section{Global Minima of Representation Cost for Residual Networks} \label{sec:resnet_global}
For residual networks, the representation of the layer is given by
\[
Z^{l+1} = \sigma_{l+1}(W_{l+1}Z^l) + Z^l
\]
and therefore we have
\[
W_{l+1} = \sigma_{l+1}^{-1}(Z^{l+1}-Z^l)Z^{l,+},
\]
giving the representation cost
\[
\LimitLoss^{res}(Z) := \begin{cases}
    +\infty, \quad \text{if}\,\, f(\Theta;X) \ne Y \\
    \frac12\|\sigma_{L}^{-1}(Z^{L})Z^{L-1,+}\|^2+\frac12\sum_{l = 2}^{L-2} \|\sigma_{l+1}^{-1}(Z^{l+1}-Z^l)Z^{l,+}\|^2,\quad \text{otherwise}.
\end{cases}
\]
The proof of the main result relies on the following lemma:
\begin{lemma}[Optimal single layer for ResNet representation cost]\label{lemma:resnet_lemma}
    Let $\localF^{res}(B) = \frac{1}{2}\| g(A-B)B^+\|^2$, where $A \in \mathbb R^{n \times N}$ is a fixed matrix and $g:\mathbb R \to \mathbb R$ is entrywise nonlinear with $g(x) = 0 \iff x = 0$.
    Then, for $n \leq m$, we have that
    \begin{align*}
    &\min_{B \in \mathbb{R}^{m \times N}}\localF^{res}(B)
    \quad  \textnormal{s.t.}\quad g(A-B)B^+B = g(A-B) 
    \end{align*}
    is attained at the point $B^* = A$.
\end{lemma}
\begin{proof}
    Notice that $\localF^{res}(B) \geq 0$ and $\localF^{res}(B) = 0$ if and only if $g(A-B)B^+ = 0$. Then we have because of the constraint that 
    \[
    0 = g(A-B)B^+B = g(A-B).
    \]
    So the only point attaining the minimum satisfies $g(A-B) = 0$ which implies $B^* = A$.
\end{proof}

In this setting we have the following result about global minima:
\begin{theorem}[$\DNC$ is optimal for constrained representation cost]\label{thm:NC1_optimal_resnet}
Let $\sigma_l$ be a sequence of entrywise nonlinearities for which $\sigma_l(x) = 0 \iff x = 0$ and assume intermediate widths satisfy the condition $K \leq n_L = n_{L-1}  = \dots = n_1 = N$.
Then, for almost every $Z^1 = \sigma_1(W_1X)$, the global minima of the optimization problem
\begin{align*}
&\min_{Z^2,\dots,Z^L} \LimitLoss^{res}(Z) \\
&\quad \textnormal{s.t.} \quad \sigma_{l+1}^{-1}(Z^{l+1}-Z^l)Z^{l,+} Z^l = \sigma_{l+1}^{-1}(Z^{l+1}-Z^l), \quad \forall \, l = 1,\dots,L-2 \\
&\quad  \qquad\, \sigma_{L}^{-1}(Z^L)Z^{L-1,+}Z^{L-1} = \sigma_{L}^{-1}(Z^L) \\
&\quad \qquad\, s_i(Z^{L-1}) \leq C s_i(\sigma_L^{-1}(Y)),\quad \forall i = 1\,\dots,r_L
\end{align*}
satisfy $\DNC$ for all intermediate layers $l = 1,\dots,L$.
\end{theorem}
\begin{proof}
Let's define with $\mathcal S$ the full constraint set and with $\mathcal S_l = \mathcal S_l(Z^1,\dots,Z^{l-1},Z^{l+1},\dots,Z^L)$ the constraint set on $Z^l$ with the other variables fixed.
    Notice that we can first minimize in $Z^2$ the problem
    \begin{align*}
    &\arg\min_{Z^2 \in \mathcal S_2} \LimitLoss^{res}(Z^L,\dots,Z^1) = \arg\min_{Z^2 \in \mathcal S_2}\|\sigma_3^{-1}(Z^3-Z^2)Z^{2,+} \|^2 \\
    & \qquad\; \text{s.t.} \quad \sigma_{3}^{-1}(Z^{3}-Z^2)Z^{2,+} Z^2 = \sigma_{3}^{-1}(Z^{3}-Z^1)
    \end{align*}
    whose minima is given by \Cref{lemma:resnet_lemma} at $Z^{2,*}(Z^3) = Z^3$ and minimal value equal to zero.
    Therefore, we can now minimize in $Z^3$ the representation cost
    \begin{align*}
&\min_{Z^3,\dots,Z^L} \LimitLoss^{res}(Z^L,\dots,Z^2,Z^{2,*}(Z^3)) = \frac12\|\sigma_{L}^{-1}(Z^{L})Z^{L-1,+}\|^2+ \frac12\sum_{l=3}^{L-1}\| \sigma_{l+1}^{-1}(Z^{l+1}-Z^l)Z^{l,+} \|^2\\
&\text{s.t.} \quad \sigma_{l+1}^{-1}(Z^{l+1}-Z^l)Z^{l,+} Z^l = \sigma_{l+1}^{-1}(Z^{l+1}-Z^l) \quad \forall \, l = 3,\dots,L-1 \\
& \qquad\, \sigma_{L}^{-1}(Z^L)Z^{L-1,+}Z^{L-1} = \sigma_{L}^{-1}(Z^L) \\
& \qquad\, s_i(Z^{L-1}) \leq C s_i(\sigma_L^{-1}(Y)),\quad \forall i = 1\,\dots,K.
\end{align*}
Notice that the situation now is the same as the initial one, but with one layer less.
Therefore, we can do the same thing  for $Z^3$ using \Cref{lemma:resnet_lemma} (with $g = \sigma_l^{-1}$) to obtain $Z^{3,*}(Z^4) = Z^4$. By iterating this procedure for all layers, we arrive at
\begin{align*}
&\min_{Z^{L-1},Z^L} \frac12 \|\sigma_{L}^{-1}(Z^L)Z^{L-1,+} \|^2 \\
& \quad \text{s.t.} \quad Z^L = Y \\
& \quad \qquad\, \sigma_{L}^{-1}(Z^L)Z^{L-1,+}Z^{L-1} = \sigma_{L}^{-1}(Z^L) \\
& \quad \qquad\, s_i(Z^{L-1}) \leq C s_i(\sigma_L^{-1}(Y)),\quad \forall i = 1\,\dots,K.
\end{align*}
By using \Cref{lemma:ff_lemma_svals} we get that 
\[
Z^{L-1,*} = O_l \sigma_L^{-1}(Y), \quad O_l^\top O_l = I.
\]
Therefore, $Z^{L-1,*}$ satisfies $\DNC$ cause $\tilde Y = \sigma_{L}(Y)$ satisfies $\DNC$. Recursively backward, we have by \Cref{lemma:resnet_lemma} that
\[
Z^{L-2,*} = Z^{L-1,*},
\]
which is therefore also collapsed. In the same way, by backsubstituting the optimal point, we get
\begin{align*}
    &Z^{L-1,*} \quad \text{collapsed}\\
    &Z^{L-2,*} = Z^{L-1,*}\\
    & \vdots \\
    & Z^{1,*} = Z^{2,*}
\end{align*}
and therefore
\[
Z^{L-1,*} = Z^{L-2,*} = \dots = Z^{1,*}
\]
and so they all satisfy $\DNC$.
\end{proof}

\begin{remark}(Stability of minima for ResNets)\label{remark:resnet_stability}
We highlight that the same $\Gamma$-convergence result as the one presented in \Cref{prop:main_gammaconv_firstlayer} can be stated also for ResNets, and the proof is exactly the same as the one presented in \Cref{subsec:stability_ff}.
More precisely, given $\phi:\mathcal S \to \R_+$ lower semicontinuous, we define
\begin{equation*}
\LimitLoss_{\lambda}^{res}(Z) = \begin{cases}
\frac12\|\sigma_{L}^{-1}(Z^{L})Z^{L-1,+}\|^2+\frac12\sum_{l = 1}^{L-2} \|\sigma_{l+1}^{-1}(Z^{l+1}-Z^l)Z^{l,+}\|^2 + \frac{\phi(Z)}{\lambda} , \,\,\text{if}\,\, Z^L = Y, \\
    +\infty, \quad \text{otherwise}. \\
\end{cases}
\end{equation*}
Then for $\lambda \to +\infty$, $\LimitLoss_\lambda^{res} \to \LimitLoss^{res}$ in the $\Gamma$-convergence sense, analogously to \Cref{prop:main_gammaconv_firstlayer}.
This will also be showcased numerically in \Cref{sec:additional_numerics}.
\end{remark}

\section{Additional Numerical Experiments}\label{sec:additional_numerics}

In this section, we include additional numerical results. All experiments were performed on a Single NVIDIA A100 (80GB) GPU.


\subsection{Residual Networks}

In this section we will show numerical results concerning \Cref{thm:NC1_optimal_resnet}. The setting is the same of \Cref{sec:num-exp}, with $L = 6$ layers (first and last feedforward, and the $4$ in the middle are residual). From \Cref{fig:resnet_weights}, we can observe the evolution of the norm of the weights during optimization, also for varying $\delta = \frac{1}{\lambda}$ in \Cref{eq:loss_regularized_on_1st_layer}. As we can see, for $\delta$ going to zero, all the weight matrices converge to zero as expected from \Cref{thm:NC1_optimal_resnet}, but the first and the last ones, which are feedforward layers.

\begin{figure}[t]
\centering
\begin{tabular}{cc}
 $\delta = 8$ &
 $\delta = 5$ \\
  \includegraphics[width=0.4\textwidth]{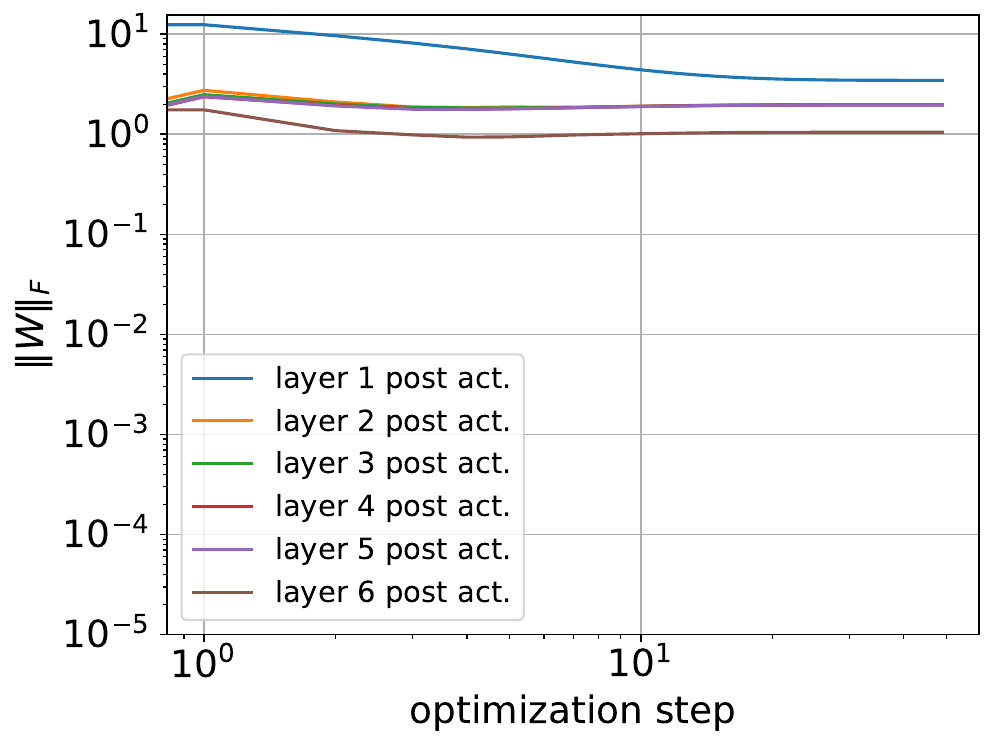} &   \includegraphics[width=0.4\textwidth]{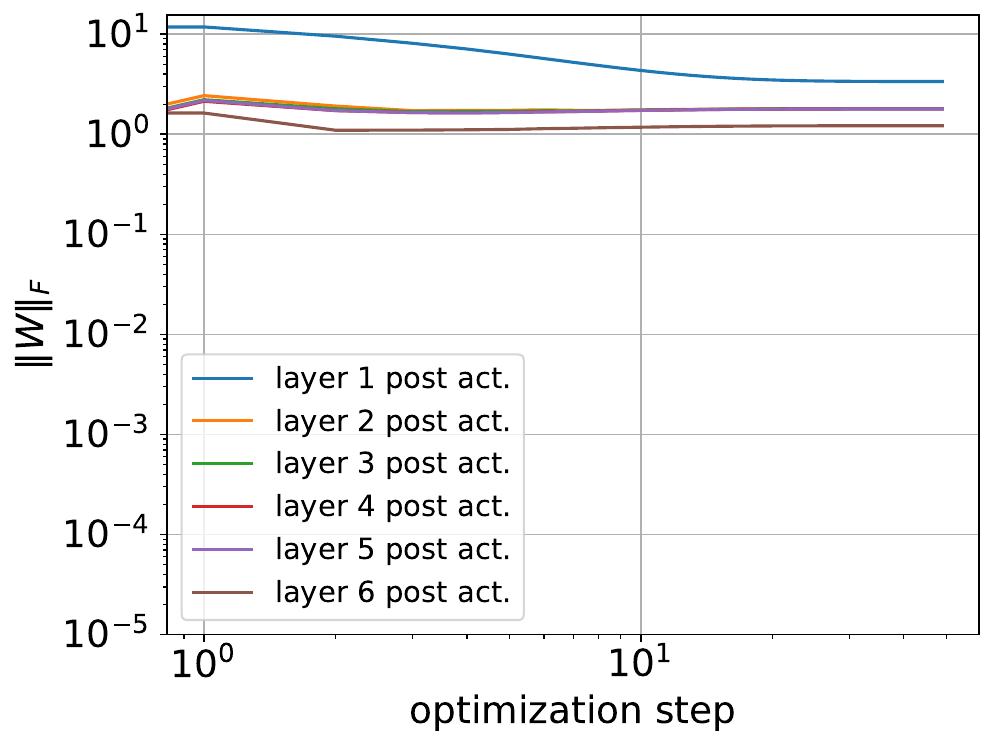} \\
  $\delta = 10^{-1}$ &
 $\delta = 10^{-2}$ \\
  \includegraphics[width=0.4\textwidth]{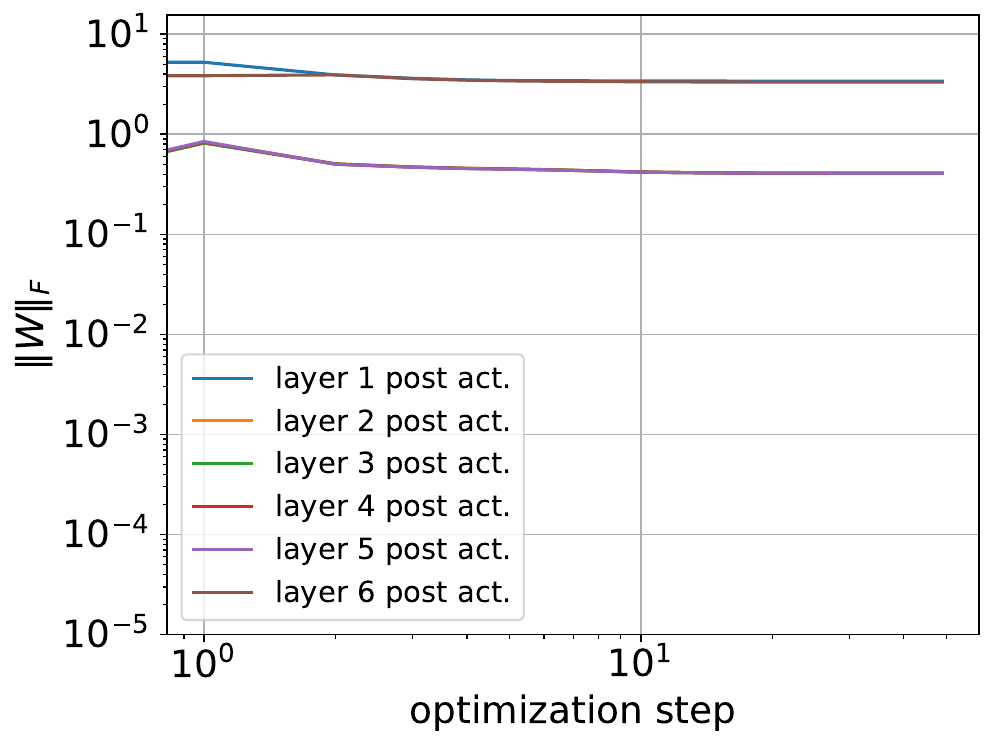} &   \includegraphics[width=0.4\textwidth]{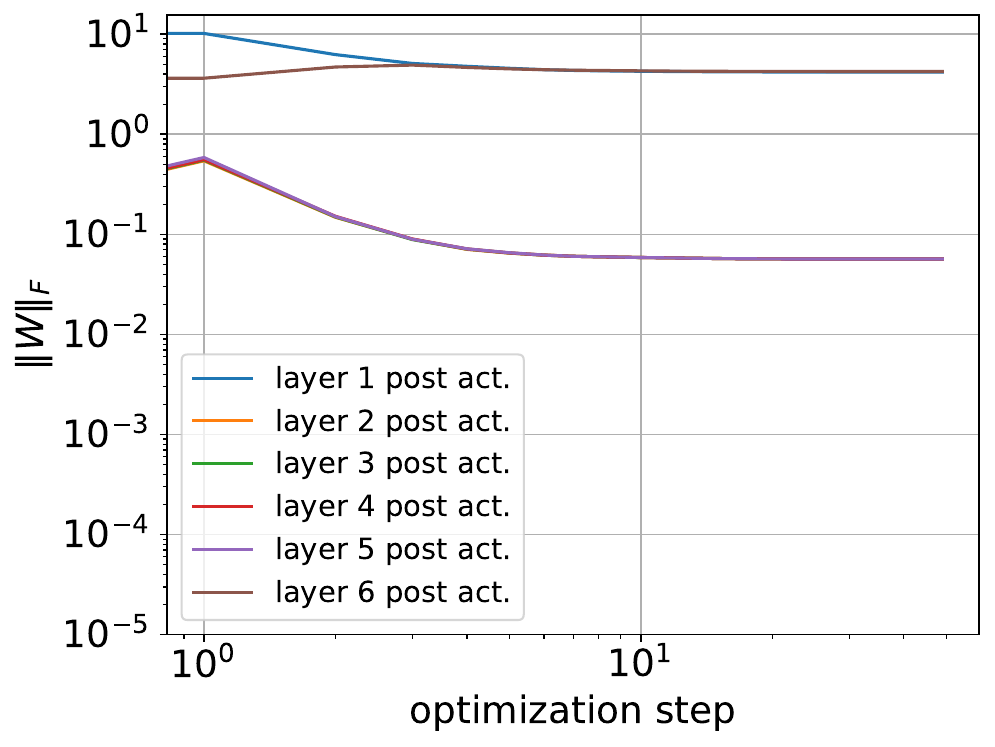} \\
  $\delta = 10^{-3}$ &
 $\delta = 0$ \\
  \includegraphics[width=0.4\textwidth]{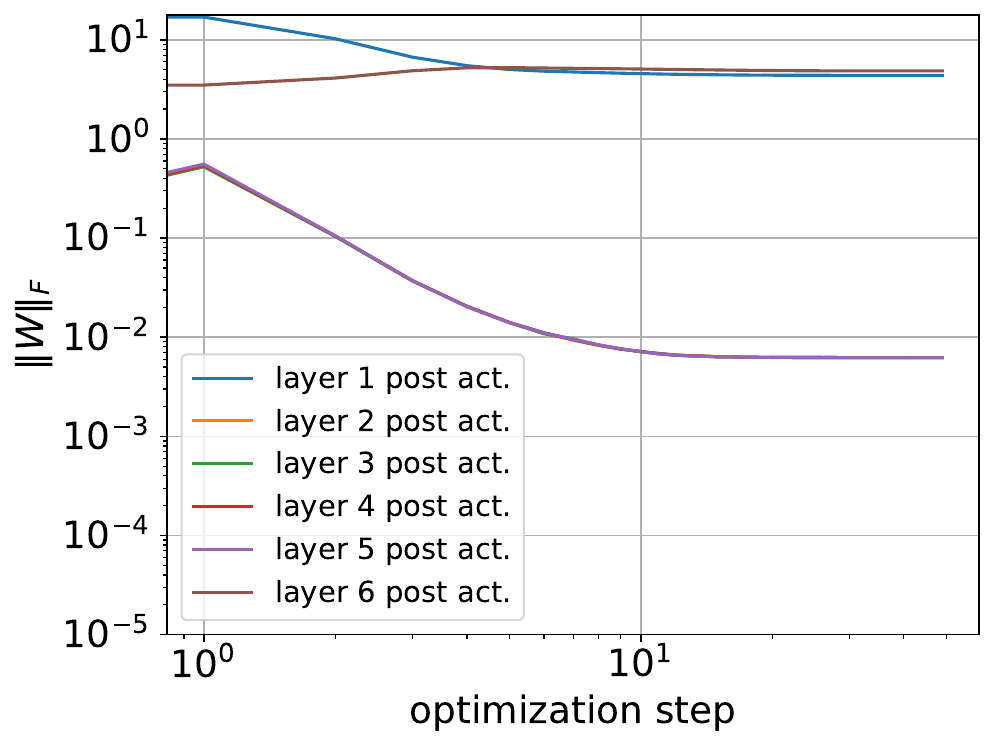} &   \includegraphics[width=0.4\textwidth]{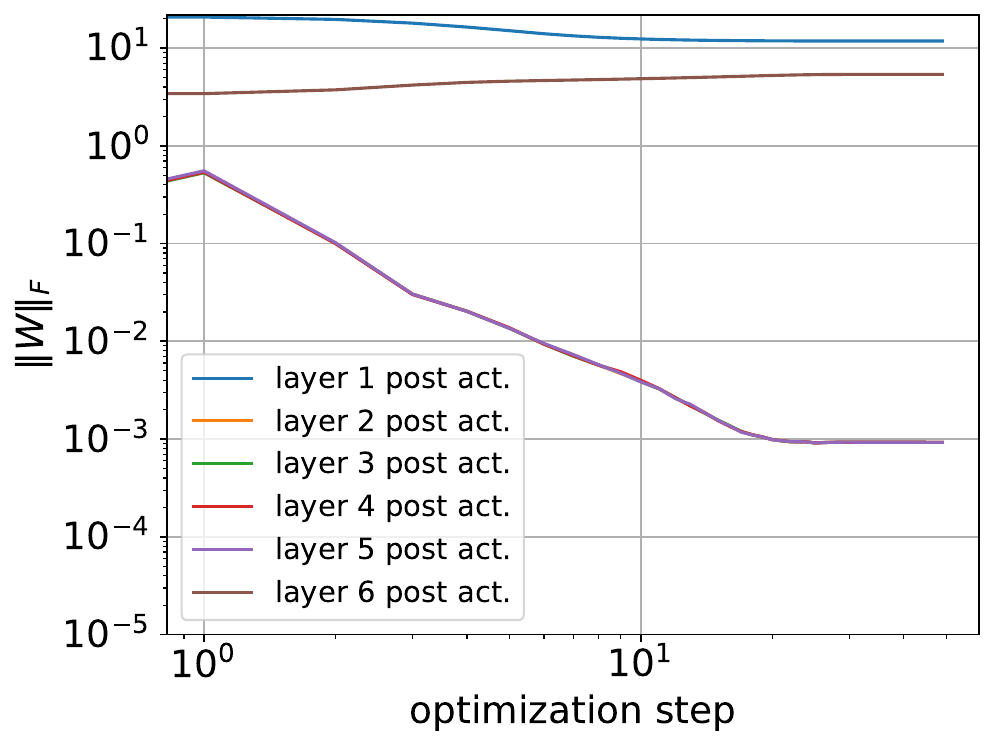} \\
\end{tabular}
\caption{Norm of the intermediate weights matrices for a for $L = 6$ layer ResNet during training. As discussed in \Cref{remark:resnet_stability}, we observe convergence to the structured minima discussed in \Cref{thm:NC1_optimal_resnet} for $\delta = \frac1\lambda$ (as in \Cref{eq:loss_regularized_on_1st_layer}).}
\label{fig:resnet_weights}
\end{figure}

\subsection{Visual Representation of Neural Collapse}

In this section we will include the evolution of the gram matrices $G^l = Z^{l,\top}Z^l$ for all layers at different phases of training. Notice that $\DNC$ happens if and only if $G^l$ has block structure. In fact, we have the following characterization:

\begin{proposition}[Characterization of $\DNC$ features through covariance]
    The following are equivalent:
    \begin{enumerate}
        \item $Z \in \mathbb{R}^{n \times N}$ satisfied NC1;
        \item $G = Z^\top Z = \left[ 
\begin{array}{c|c|c|c|c} 
  \alpha_{11}  1_{N_1}  1^\top_{N_1} & \alpha_{12}  1_{N_1}  1^\top_{N_2} & \alpha_{13}  1_{N_1}  1^\top_{N_3} & \dots &\alpha_{1K}  1_{N_1}  1^\top_{N_K} \\
  \hline
  \alpha_{21}  1_{N_2}  1^\top_{N_1} &  \alpha_{22}  1_{N_2}  1^\top_{N_2} & \alpha_{23}  1  1^\top & \dots &\alpha_{2K}  1  1^\top\\
  \hline
  \vdots &\dots &\ddots & \dots &\alpha_{K-1,K}  1  1^\top \\
  \hline
  \alpha_{K1}  1_{N_K}  1^\top_{N_1} & \alpha_{K,2}  1  1^\top & \alpha_{K,3}  1  1^\top & \dots &  \alpha_{K,K}  1  1^\top 
\end{array} 
\right]$ has block structure.
    \end{enumerate}
\end{proposition}
\begin{proof}
For simplicity, throughout the proof, we will denote with $z_i^k$ the $i$-th vector of the $k$-th class.
    Assume first that $Z$ satisfies NC1.  Then we have that $z_i^k=z^k$ for all $i=1,\dots, N_k$ and
    \[
    Z^\top Z = \left[ 
\begin{array}{c}
1_{N_1} z^{1,\top} \\
1_{N_2} z^{2,\top} \\
\vdots \\
1_{N_K} z^{K,\top} \\
\end{array}
\right]
\left[ 
\begin{array}{c}
z^1 1_{N_1}^\top, 
z^2 1_{N_2}^\top, 
\dots 
z^K 1_{N_K}^\top
\end{array}
\right] = \Bigl(\langle z^i, z^j \rangle  1_{N_i} {1}_{N_j}^\top\Bigr)_{i,j = 1,\dots,K},
    \] 
    which has the required structure of $Z^\top Z$ with $\alpha_{ij} = \langle z^i, z^j \rangle$.
    Vice versa, assuming that $Z^\top Z$ has the required structure, we have
    \[
    \alpha_{kk} = \langle z_i^k, z_j^k \rangle,\quad \forall i,j = 1,\dots,N_k \;\text{and}\quad \forall k = 1,\dots,K.
    \]
    This implies that
    \[
    0 = \langle z_i^k, z_j^k - z_{m}^k \rangle, \quad \forall i,j,m = 1,\dots,N_k.
    \]
    This, in turn, implies by definition of orthogonal projection that
    \[
    z_i^k = P^\perp_{\mathbb A^k}(0),\quad \forall i = 1,\dots,N_k\; \text{and}\quad \forall k  =1,\dots,K,
    \]
    where $\mathbb A_k = \text{Aff}(\{z_j^k\}_j)$ is the affine subspace passing through the points of class $k$.
    By uniqueness of the orthogonal projection we get
    \[
    z_i^k = z_j^k = P_{\mathbb A^k}^\perp (0) \quad \forall i,j = 1,\dots,N_k\; \text{and}\quad \forall k=1,\dots,K,
    \]
    which means $Z$ satisfies NC1.
\end{proof}
Given this premise, in the following experiments we will plot the Gram matrices $G_l=Z^{l,\top}Z^l$ for any $l=0,\dots,L-1$ in the case of a $L = 15$ layer neural network trained in the setting of \Cref{sec:num-exp}. The emergence of the block structure is depicted in \Cref{fig:block_structure}.

Moreover, we sampled a random plane in each feature space, i.e., for any $l$ we sample a random orthogonal matrix $B^l \in \mathbb R^{2 \times n_l}, B^lB^{l,\top} = I_2$. \Cref{fig:random_proj} shows the projection of the feature matrices $Z^l$ on the plane spanned by the rows of $B^l$, i.e. $B^l Z^l$, for which we plot all the columns as points in $\mathbb R^2$, allowing us to pictorially see the deep-neural (rank) collapse in all intermediate layers in this random subspace.

\begin{figure}[t]
\centering
\begin{tabular}{cc}
Epoch 0 & 
Epoch 100 \\
  \includegraphics[width=0.4\textwidth]{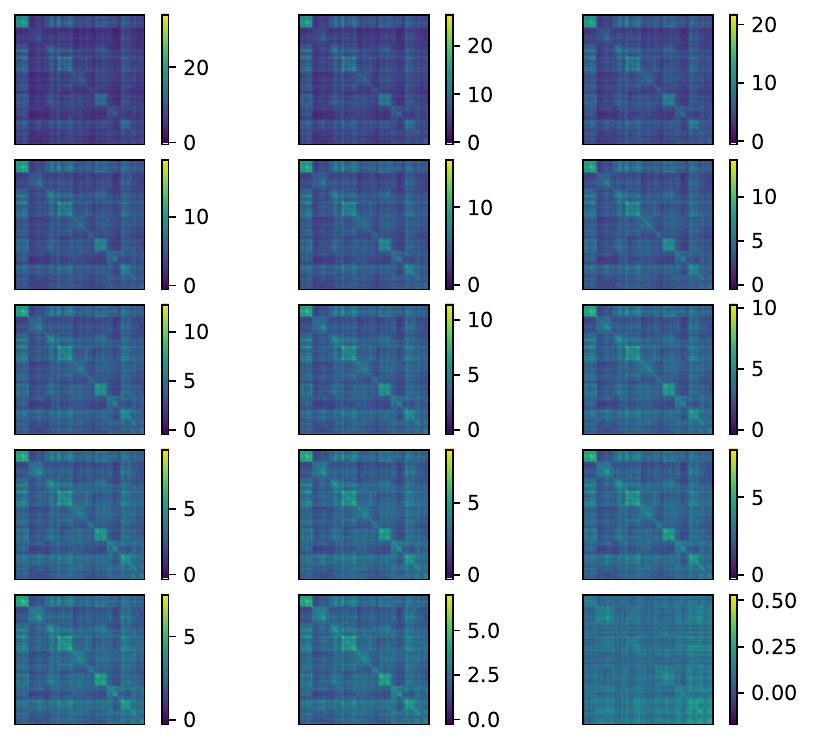} &   \includegraphics[width=0.4\textwidth]{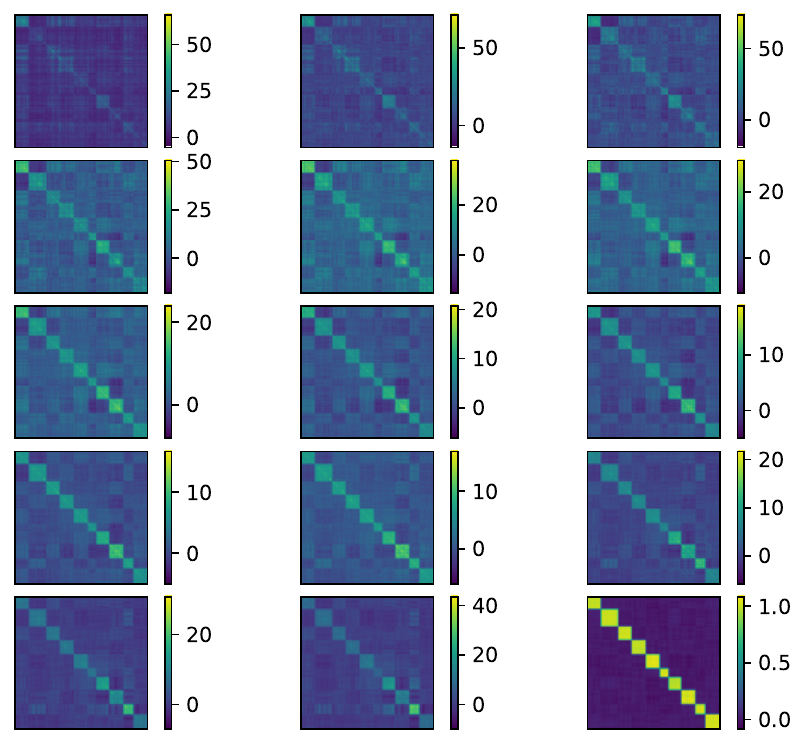} \\
  Epoch 200 & 
Epoch 400 \\
  \includegraphics[width=0.4\textwidth]{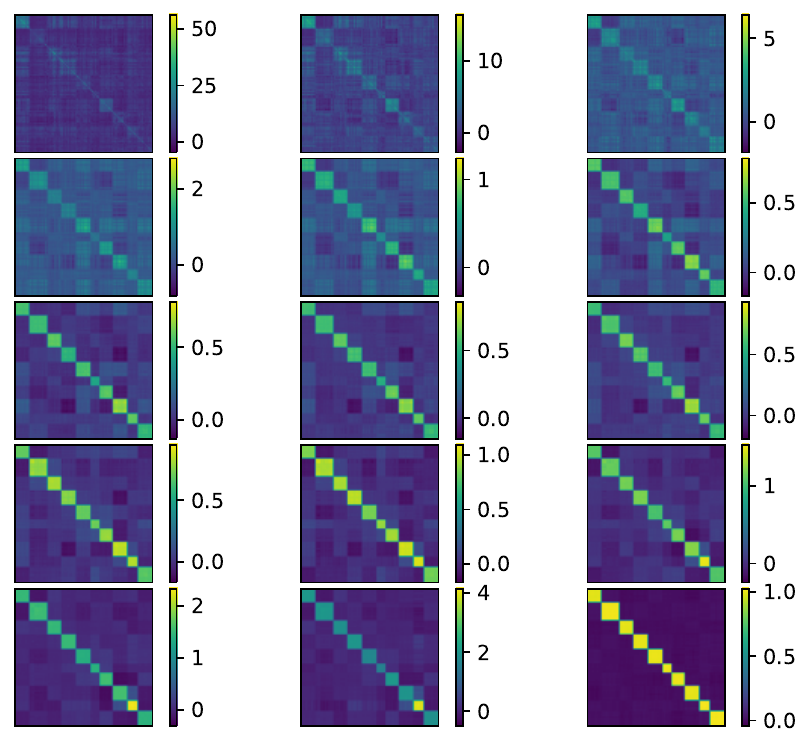} &   \includegraphics[width=0.4\textwidth]{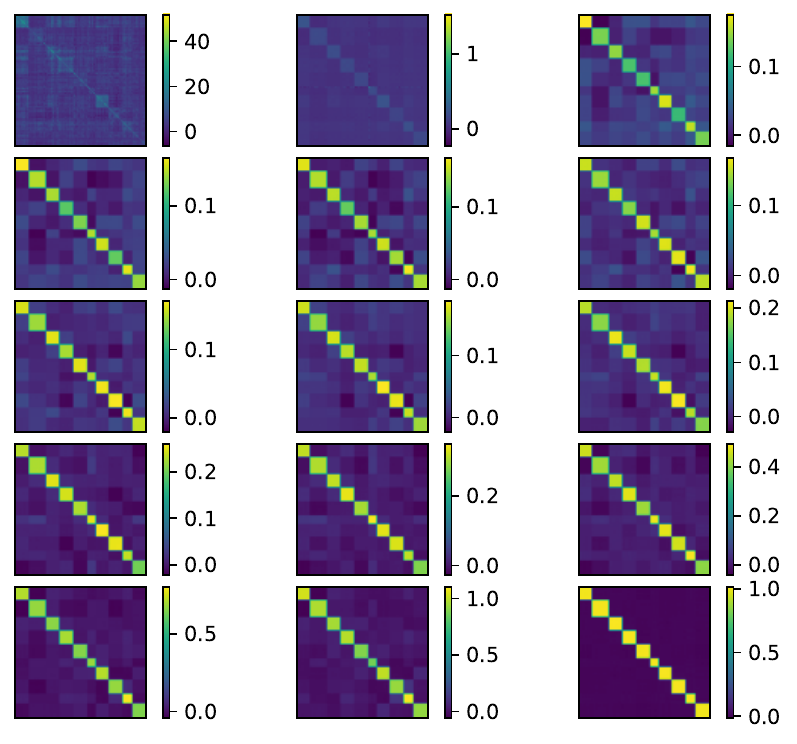} \\
  Epoch 700 & 
Epoch 1200 \\
  \includegraphics[width=0.4\textwidth]
  {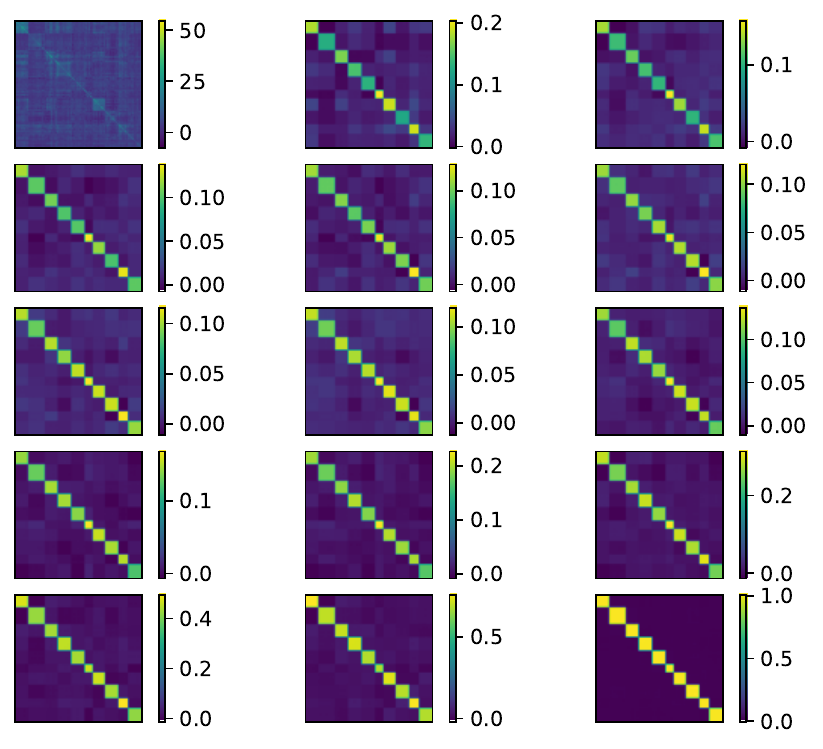} &   \includegraphics[width=0.4\textwidth]{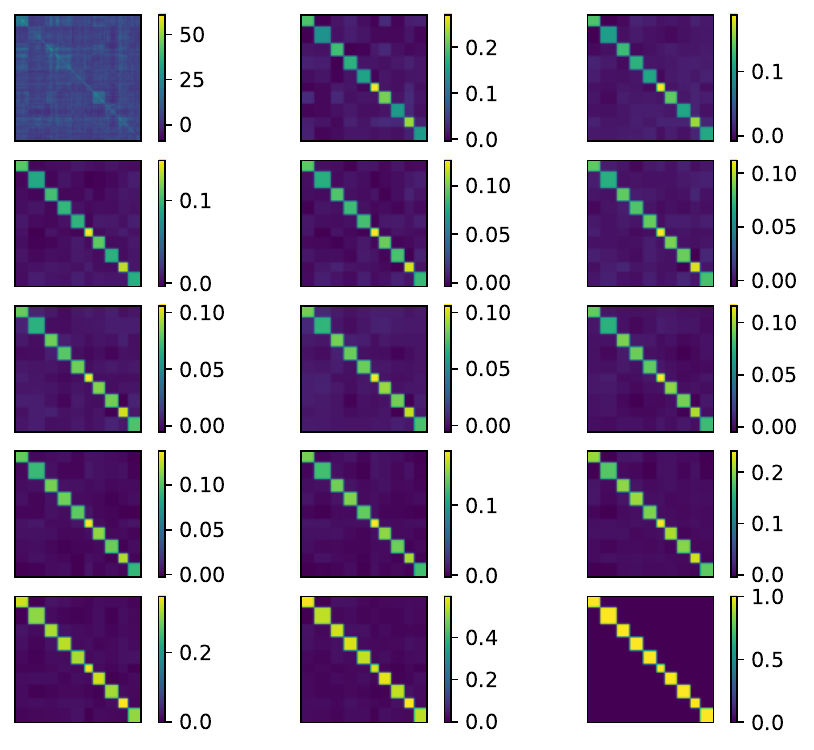} \\
\end{tabular}
\caption{Emergence of the block structure in $Z^{l,\top}Z^l$ characterizing $\DNC$ during training. Layers for each epoch are ordered from left to right and from top to down.}
\label{fig:block_structure}
\end{figure}

\begin{figure}[t]
\centering
\begin{tabular}{cc}
Epoch 0 & 
Epoch 100 \\
  \includegraphics[width=0.4\textwidth]{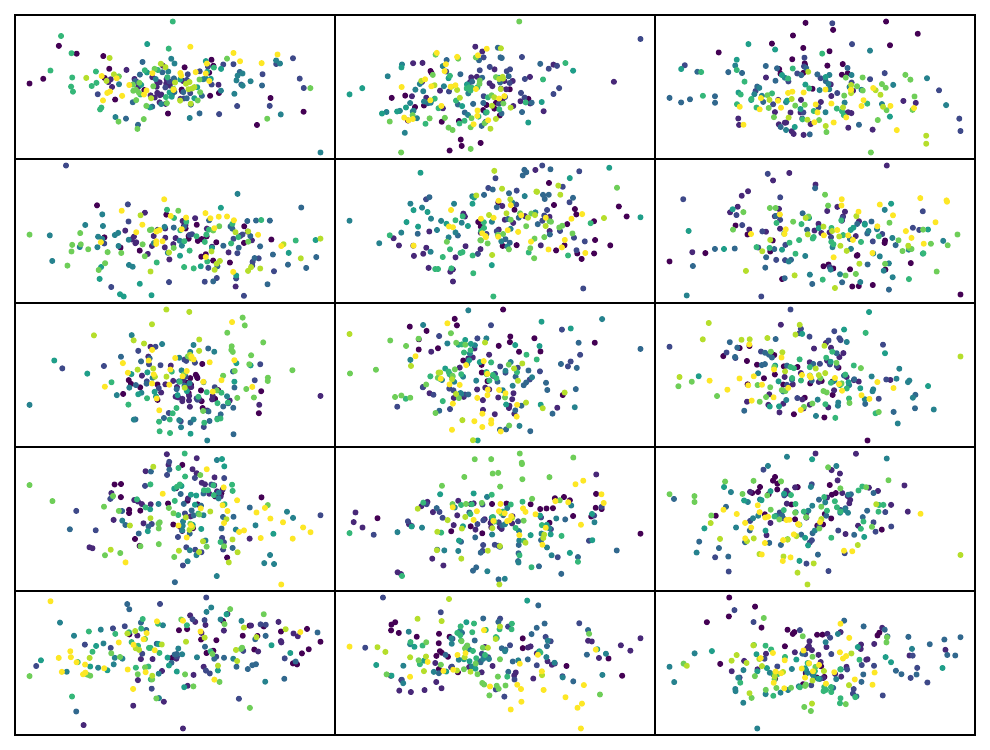} &   \includegraphics[width=0.4\textwidth]{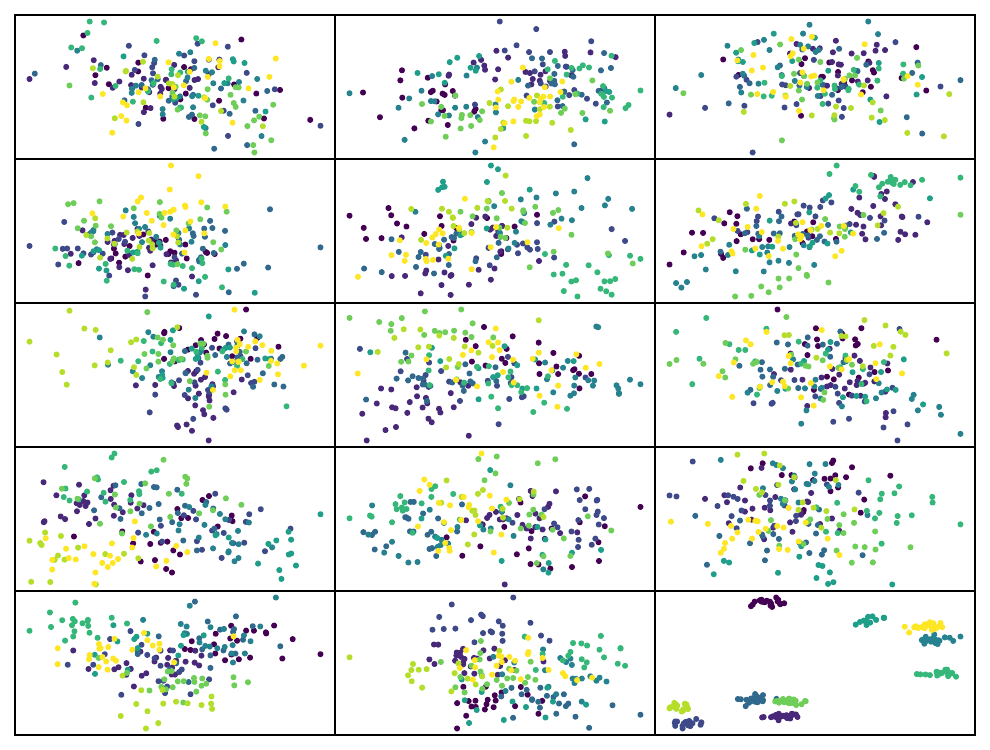} \\
  Epoch 200 & 
Epoch 300 \\
  \includegraphics[width=0.4\textwidth]{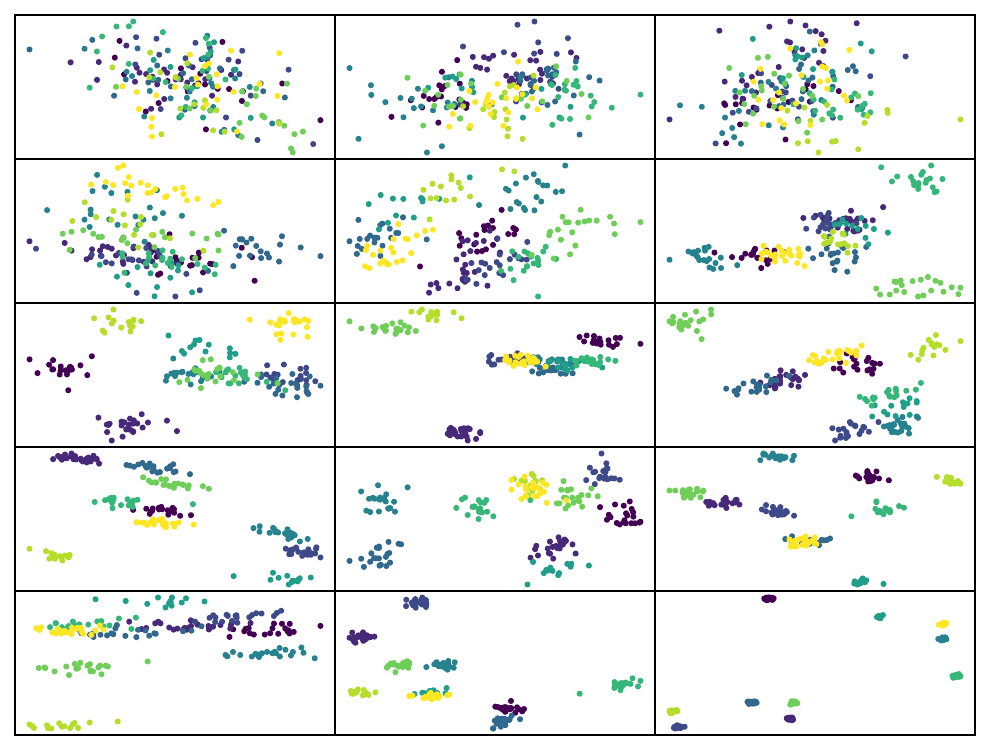} &   \includegraphics[width=0.4\textwidth]{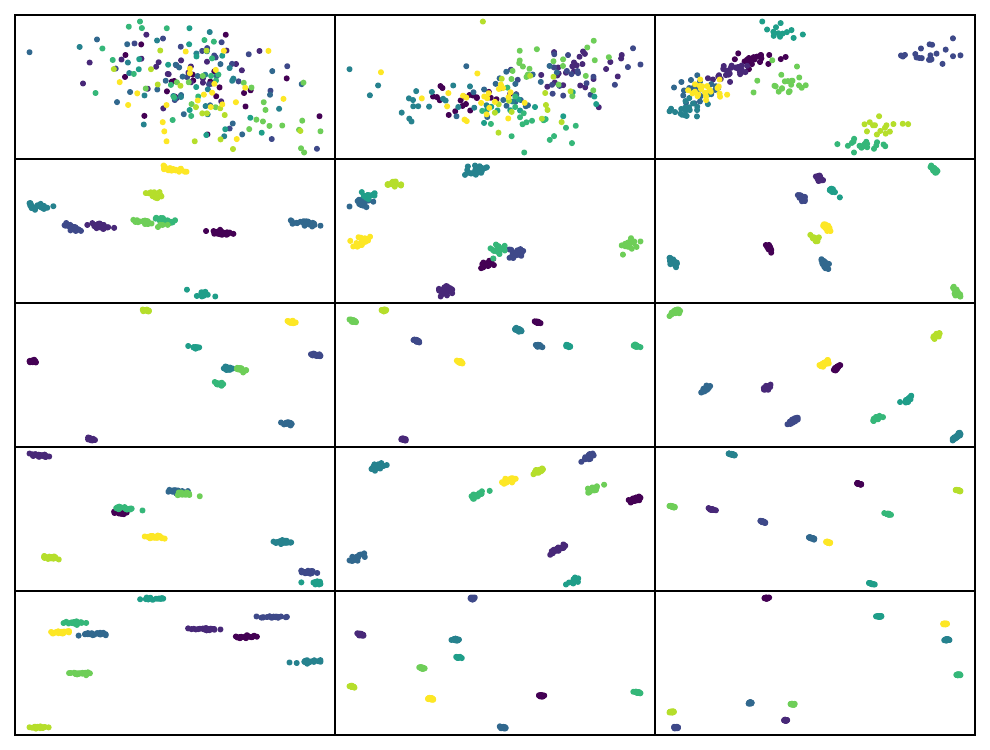} \\
  Epoch 700 & 
Epoch 1200 \\
  \includegraphics[width=0.4\textwidth]{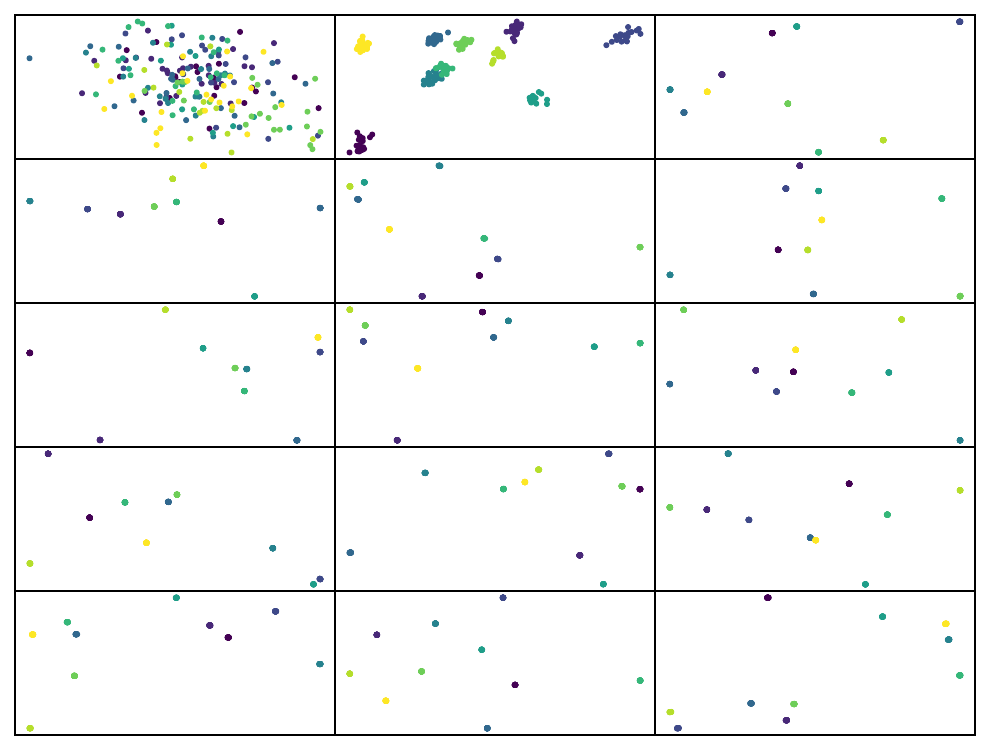} &   \includegraphics[width=0.4\textwidth]{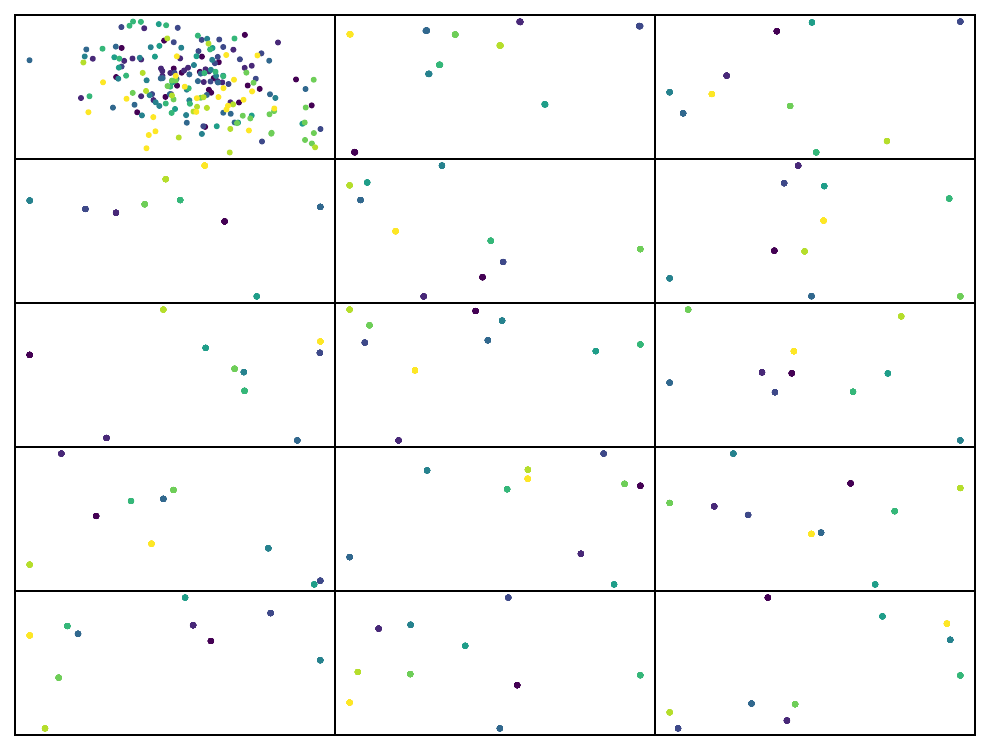} \\
\end{tabular}
\caption{Neural collapse of features in a random $d = 2$ dimensional subspace, where each color represents a class of MNIST. Layers for each epoch are ordered from left to right and from top to down.}
\label{fig:random_proj}
\end{figure}

\end{document}